\documentclass[twoside]{article}
\usepackage[colorlinks=true,linkcolor=magenta,citecolor=magenta]{hyperref}
\usepackage[letterpaper, margin=1in]{geometry}

%opening
\title{Regularized Training of Intermediate Layers for Generative Models for Inverse Problems}

\usepackage[utf8]{inputenc}
\usepackage[T1]{fontenc}
\usepackage{authblk}
\usepackage[numbers]{natbib}
%\usepackage[numbers]{natbib}

%\author{Sean Gunn, Jorio Cocola, Paul Hand }
\author[1]{Sean Gunn}
\author[2]{Jorio Cocola}
\author[1,2]{Paul Hand}
\affil[1]{Khoury College of Computer Sciences, Northeastern University}
\affil[2]{Department of Mathematics, Northeastern University}
\date{}

\usepackage{microtype}
\usepackage[pdftex]{graphicx}
\usepackage{subfigure}
\usepackage{booktabs} % for professional tables
\usepackage{sectsty}
\usepackage{float}
\usepackage{hyperref}
\sectionfont{\fontsize{12}{15}\selectfont}

% hyperref makes hyperlinks in the resulting PDF.
% If your build breaks (sometimes temporarily if a hyperlink spans a page)
% please comment out the following usepackage line and replace
% \usepackage{icml2022} with \usepackage[nohyperref]{icml2022} above.
\usepackage{hyperref}

% Attempt to make hyperref and algorithmic work together better:

% For theorems and such
\usepackage{amsmath}
\usepackage{amssymb}
\usepackage{mathtools}
\usepackage{amsthm}
\usepackage{amsfonts,amsmath,bbm,amssymb, amsthm,bm}

\usepackage{xcolor}
\usepackage{soul}
\usepackage{import}
\usepackage{svg}
\usepackage{comment}
\usepackage{float}
\usepackage{algorithm}
\usepackage[noend]{algpseudocode}

\svgsetup{inkscapelatex=false}

\DeclareMathOperator*{\argmin}{arg\,min}
\newcommand{\R}{\mathbb{R}}
\newcommand{\I}{{I}}

\newcommand{\E}{\mathbb{E}}
\newcommand{\Gtil}{\widetilde{G}}
\newcommand{\G}{G}

\newcounter{alphasect}
\def\alphainsection{0}

\let\oldsection=\section
\def\section{%
  \ifnum\alphainsection=1%
    \addtocounter{alphasect}{1}
  \fi%
\oldsection}%

\renewcommand\thesection{%
  \ifnum\alphainsection=1% 
    \Alph{alphasect}
  \else%
    \arabic{section}
  \fi%
}%

\newenvironment{alphasection}{%
  \ifnum\alphainsection=1%
    \errhelp={Let other blocks end at the beginning of the next block.}
    \errmessage{Nested Alpha section not allowed}
  \fi%
  \setcounter{alphasect}{0}
  \def\alphainsection{1}
}{%
  \setcounter{alphasect}{0}
  \def\alphainsection{0}
}%

\newcommand{\Wstar}{W^{\star}}

\newcommand{\ztil}{\widetilde{z}}
\newcommand{\zhat}{{z}}
\newcommand{\Wvan}{W^{\text{Van}}}
\newcommand{\Wrtil}{W^{\text{RTIL}}}
\newcommand{\Gvan}{G^\text{Van}}
\newcommand{\Gvantil}{\widetilde{G}^{\text{Van}}}
\newcommand{\Grtil}{\widetilde{G}^\text{RTIL}}
\newcommand{\zstar}{z^{\star}}
\newcommand{\zvan}{z^\text{van}}
\newcommand{\zrtil}{z^\text{rtil}}

\newcommand\blfootnote[1]{%
  \begingroup
  \renewcommand\thefootnote{}\footnote{#1}%
  \addtocounter{footnote}{-1}%
  \endgroup
}

% if you use cleveref..
\usepackage[capitalize,noabbrev]{cleveref}

%%%%%%%%%%%%%%%%%%%%%%%%%%%%%%%%
% THEOREMS
%%%%%%%%%%%%%%%%%%%%%%%%%%%%%%%%
\theoremstyle{plain}
\newtheorem{theorem}{Theorem}[section]

\newtheorem{lemma}[theorem]{Lemma}

\theoremstyle{definition}

\theoremstyle{remark}

% Todonotes is useful during development; simply uncomment the next line
%    and comment out the line below the next line to turn off comments
%\usepackage[disable,textsize=tiny]{todonotes}

\usepackage[textsize=tiny]{todonotes}
\begin{document}
\twocolumn
\maketitle
\begin{abstract}
Generative Adversarial Networks (GANs) have been shown to be powerful and flexible priors when solving inverse problems. One challenge of using them is overcoming representation error, the fundamental limitation of the network in representing any particular signal. Recently, multiple proposed inversion algorithms reduce representation error by optimizing over intermediate layer representations. These methods are typically applied to generative models that were trained agnostic of the downstream inversion algorithm. 
In our work, we introduce a principle that if a generative model is intended for inversion using an algorithm based on optimization of intermediate layers, it should be trained in a way that regularizes those intermediate layers. We instantiate this principle for two notable recent inversion algorithms: Intermediate Layer Optimization and the Multi-Code GAN prior. For both of these inversion algorithms, we introduce a new regularized GAN training algorithm and demonstrate that the learned generative model results in lower reconstruction errors across a wide range of under sampling ratios when solving compressed sensing, inpainting, and super-resolution problems. 
\end{abstract}
\blfootnote{Sean Gunn <gunn.s@northeastern.edu>}
\blfootnote{Jorio Cocola <cocola.j@northeastern.edu>}
\blfootnote{Paul Hand <p.hand@northeastern.edu>}
\blfootnote{Github Link : \href{https://github.com/g33sean/RTIL}{https://github.com/g33sean/RTIL}    }
\vspace{-.5cm}
\section{Introduction}
\vspace{-.2cm}
The task in an inverse problem is to estimate an unknown signal given a (possibly noisy) set of measurements of that signal. 
%Inverse problems naturally occur in a wide range of applications, such as medical and geophysical imaging \XXX, x-ray crystallography\XXX, and 3d shape reconstruction \XXX. 
In practice, inverse problems are often ill-posed, which often require the incorporation of prior information about the target signal to recover a reasonable estimate of it. 
 
Deep generative models have demonstrated remarkable performance when used as priors for solving inverse problems \citep{bora2017compressed, athar2018latent, NEURIPS2018_1bc2029a, menon2020pulse,mardani2018deep, mosser2020stochastic,pan2021do}. 
Typically, a generative modeling-based approach for inverse problems has two phases: a training phase and an inversion/deployment phase. In the first phase, a generative network is trained on a dataset of images different from the data in the testing phase. After training, the network parameters are typically fixed and an optimization problem involving the known forward model is solved to estimate an unknown
signal of interest. An advantage of this class of methods is that the prior can be trained entirely in an unsupervised manner and without previous knowledge of the specific inverse problem that needs to be solved downstream. This allow the trained network to be used for a variety of inverse problems. 
 
Some generative networks explicitly map a low-dimensional input to a high-dimensional signal space. The outputs of the generative network therefore have a low intrinsic dimensionality. While this is useful for  regularizing the inverse problem, it can limit the expressivity of the network and lead to \textit{representation error}, the error between the target signal and the closest signal in the range of the generative network.
 
Several recent methods have attempted to reduce representation error by introducing optimization algorithms that enlarge the search space of the inversion algorithm. These include methods that optimize both over the input of the network as well as over the activations of one or more intermediate layers. Examples of this class of inversion methods include  Intermediate Layer Optimization (ILO) \citep{ILO2021}, GAN Surgery \citep{smedemark2021generator}, and the Multi-code GAN Prior (mGANprior) \citep{Gu_2020_CVPR}. %In this work we focus on ILO and mGANprior as GAN Surgery can be viewed as a variation of ILO.
These algorithms were applied to state-of-the-art off-the-shelf GANs, such as PGGAN and StyleGAN \citep{karras2018progressive,karras2020analyzing}, which were trained agnostic to the specific optimization algorithm that would be used for inversion.  Consequently, the inversion algorithms optimize over a regions of the space of intermediate presentations that were not regularized during training. This suggests that it may be possible to improve the reconstruction performance of these algorithms by explicitly regularizing the intermediate layers during the training phase. In this paper we confirm this hypothesis.   

%  These optimization algorithms for inversion were introduced and studied using state of the art off-the-shelf generative models, such as PGGAN and StyleGAN \citep{karras2018progressive,karras2020analyzing}. These GANs were trained agnostic to the specific optimization algorithm that would be used for inversion. These class of algorithms optimize over intermediate layers that were not specifically regularized during training. This suggests that it may be possible to improve the reconstruction performance of these algorithms by explicitly regularizing the intermediate layers during the training phase. In this paper we confirm this hypothesis.   
 
 We consider the problem of training a GAN that will be used for solving inverse problems via an inversion algorithm that optimizes over intermediate layers. We put forward the following principle termed \textit{Regularized Training of Intermediate Layers (RTIL)}:
\begin{center}
    \textit{If a GAN's intermediate layers are optimized during inversion, they should be regularized during training.}
\end{center}

We use this principle to derive new training algorithms for GANs that are intended for solving inverse problems. For both ILO and the mGANprior, we introduce a new training algorithm for StyleGAN and PGGAN, respectively. For these new trained GANs, we achieve improved reconstruction quality relative to the corresponding GANs trained without the principle.

This principle may lead to the following workflow for solving inverse problems with GANs. First, train a generative network with latent variable $z_0$, sampled from a latent distribution $p_{z_0}$, and outputting signals $x$ from the target distribution (Figure \ref{fig:rtil_princ} Step 1). Second, explore various inversion algorithms, including some that optimize over intermediate layers by introducing an
additional optimization variable $z_1$ (Figure 1 Step 2).  Algorithms of this type include ILO and mGANprior. Third, if such an algorithm provides competitive performance for inversion, then use RTIL to devise a new GAN training algorithm.  This can be achieved by introducing a new latent variable $z_1 \sim p_{z_{1}}$ where  $p_{z_{1}}$ is an appropriate probability distribution  (Figure \ref{fig:rtil_princ} Step 3). 
Finally, during deployment, use this trained generative network for inverse problems via the selected inversion algorithm (Figure \ref{fig:rtil_princ} Step 4).

This workflow demonstrates a way to use recently introduced inversion algorithms in order to inspire new training algorithms for GANs. It provides additional ways of training GANs knowing that they will be used for inversion, and it provides a way to ensure that some empirically successful inversion algorithms are operating in a more principled manner by ensuring they are searching over a space of parameters that has been suitably regularized.

\begin{figure}[H]
\begin{center}
\includegraphics[scale=0.22]{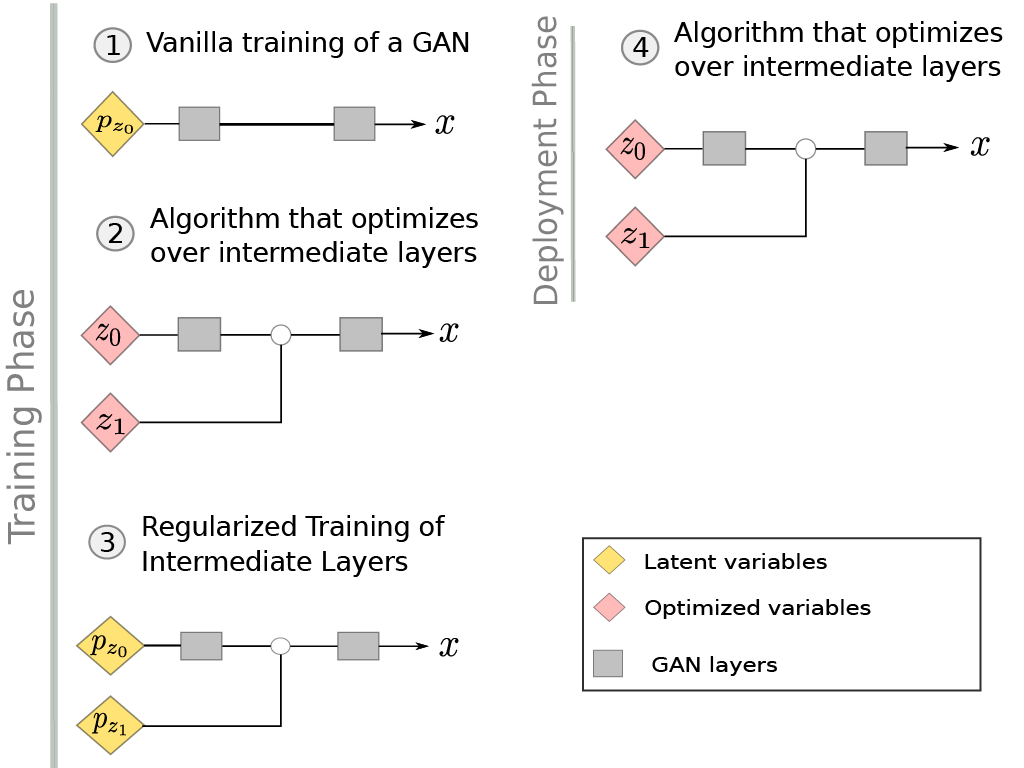}
\end{center}
\caption{The workflow for using Regularized Training of Intermediate Layers (RTIL) to develop new training algorithms of GANs that will be used for inversion.  See the text for details.}
\label{fig:rtil_princ}
\end{figure}

%\subsection{Contributions}

The contributions of this paper are the following. 
\begin{itemize}

\item We introduce Regularized Training of Intermediate Layers, a principle for training deep generative networks that are intended to be used for inverse problems with an optimization algorithm that optimizes over intermediate layer representations.

\item In the case of Intermediate Layer Optimization, we use our principle to devise a novel GAN training algorithm.  With the resulting trained GAN, we demonstrate lower reconstruction errors (compared to GAN training without the principle) for compressed sensing, inpainting, and super resolution over a wide range of under sampling ratio.

\item  We show the versatility of the method by repeating the same contribution in the case of the Multi-Code GAN Prior. 

\item We illustrate the benefits of compressed sensing with RTIL by theoretically showing that a model trained without regularizing intermediate layers has a strictly larger reconstruction error than a model where the intermediate layers where regularized during training.  For simplicity, this result is established in the case of a two layer linear neural network trained in a supervised setting.

%analyze a theoretical model for compressed sensing with RTIL. We theoretically show that a model trained without regularizing the intermediate layers, has a strictly larger reconstruction error than a model where the intermediate layers where regularized during training. 

\end{itemize}

\subsection{Related Works}

%In the recent years, various approaches have been proposed for dealing with the representation error when solving inverse problems with a learned generative prior. One class of approaches essentially enlarge the latent space dimension either at training or at inversion time. For example, \cite{athar2018latent} trained generative networks with very high dimensional latent spaces,
%that outperformed the baseline methods at various image restoration tasks. \cite{dhar2018modeling} allowed sparse deviations from the range of the generative network during the inversion phase.  \cite{hussein2020image} demonstrated increased reconstruction performances, by training standard generative networks and optimizing, during the inversion phase, not only over the latent space but also over the weights of the network. 

%In the recent years, various approaches have been proposed for dealing with the representation error when solving inverse problems with a learned generative prior. 
%One class of approaches essentially enlarge the latent space dimension either at training or at inversion time. For example, \cite{athar2018latent} trained generative networks with very high dimensional latent spaces,
%while \cite{dhar2018modeling} allowed sparse deviations from the range of the generative network during the inversion phase. Similarly, \cite{hussein2020image} optimized over the latent space and weights over the network during the inversion phase. 

In the recent years, various approaches have been proposed for dealing with the representation error when solving inverse problems with a learned generative prior. As described above, there is one class of approaches that focus on enlarging the latent space dimension either at training or at inversion time \citep{athar2018latent,dhar2018modeling,hussein2020image}. On the other hand, there is another notable class of approaches recently put forward is based on flow-based invertible neural networks. These generative networks have invertible architectures and a latent space with the same size of the image space, and thus have zero representation error \citep{ardizzone2018analyzing,asim2020invertible,ma2009deblurring, kelkar2021compressible,shamshad2019subsampled,whang2021solving,whang2021composing,helminger2021generic}. Both these type of approaches attempt to limit the representation error during training or inversion. 

Similarly to flow-based models, score networks and subsequent variants \citep{song2019generative,song2020improved,song2020score}  have support on the entire signal space, allow for conditional sampling and likelihood estimation, and have been object of recent interests for their use in inverse problems \citep{ramzi2020denoising,jalal2021robust,jalal2021instance}.

The increased representation power of the above mentioned generative networks, comes usually at a price of increased computational cost both during the training and the inversion phase. In contrast, our proposed principle leads to training algorithms that have essentially the same computational cost as the standard ones, also leaving the cost of the inversion algorithms the same.

%Finally, a stream of works have proposed
%proposed the end-to-end training and design of network architectures directly inspired by classical iterative algorithms used for solving inverse problems \citep{gregor2010learning,yang2016deep,chen2018theoretical,adler2018learned,solomon2019deep,gilton2019neumann, tolooshams2021unfolding}.  These methods, often referred to as unrolled or unfolded methods, are trained for a specific inverse problem in a supervised manner.%, while in our case the training procedure is entirely unsupervised and gives rise to universal priors that can be used for different inverse problems.

\section{Generative Models and Optimization Algorithms for Inverse Problems }
\subsection{Background}
\label{sec:background}
We consider the problem of training a GAN for the use of a prior for solving inverse problems. We consider a GAN $G$ which maps a latent space $\mathbb{R}^{n_{0}}$ to an image space $\mathbb{R}^{n_{d}}$, where $n_{0} \ll n_{d}$. In this paper, we focus on the linear imaging inverse problems of compressed sensing, inpainting, and superresolution, though our proposed method also applies to nonlinear inverse problems such as phase retrieval, and inversion problems about non-image signals. We consider the general linear inverse problem of recovering an image $x \in \R^{n_{d}}$ from a set of  linear measurements $y \in \R^{m}$, given by $y = \mathcal{A}(x)$, where 
$\mathcal{A}: \R^{n_{d}} \to \R^{m}$ is a forward linear measurement operator. We study only the noiseless case, but our method easily extends to case with measurement noise. In this paper we study measurement operators $\mathcal{A}$ of the following form: 

\begin{itemize}
\item  Compressed Sensing: $\mathcal{A}= A \in \R^{m \times n_{d}}$, $A$ is a random matrix that samples from a known distribution with $m < n_{d}$. 
\item Inpainting: $\mathcal{A} = M \in \R^{m \times n_{d}}$,  $M$ is a masking matrix with binary entries.
\item Super-Resolution: $\mathcal{A} =S_{m_{\downarrow}} \in \R^{m \times n_{d}}$ where $S_{m_{\downarrow}}$ is the downsampling operator with downsampling factor $m_{\downarrow}$. 

\end{itemize}
An estimate of $x$ can be recovered by finding the image in the range of $G$ that is most consistent with the measurements $y$ in the following sense, as introduced in \citep{bora2017compressed}.  First, solve
\begin{align}
\hat{{z}}_{0} = \underset{z_{0}}{\argmin} \hspace{2mm} \|y- \mathcal{A}(G(z_{0})) \|,
\end{align}
then, the estimate of $x_0$ is given by $G(z_{0})$. %Note this formulation is not limited to images, and can be extended to an arbitrary signal. 

As mentioned in the introduction, a difficulty of this optimization approach is that the estimated images are constrained to live within the range of $G$, which is a $n_{0}$-dimensional manifold in $\R^{n_{d}}$.  Most images $x_0$ will not live exactly in this range, and thus the method is limited by the representation error $\min_{z_{0}} \| x_0 - G(z_{0}) \|$.  

In the next sections we review two recent algorithms for mitigating representation error during inversion. For ease of exposition, we will discuss the case where only one intermediate layer representation is optimized.  We write $G = g_1 \circ g_0$  where $g_0: \R^{n_0} \to \R^{n_1}$ and $g_1: \R^{n_1} \to \R^{n_d}$.

%In order to reduce representation error, multiple papers have presented approaches that optimize over intermediate layer representations.  A generative model $\displaystyle G: \R^{n_{0}} \mapsto \R^{n_{d}}$, can be decomposed into a composition of d many layers $\displaystyle G =  g_{d-1} \circ \cdots \circ g_{1} \circ g_{0}  $, such that each  $\displaystyle g_{i} : \R^{n_{i}} \mapsto \R^{n_{i+1}} \hspace{1.5mm} \text{for} \hspace{1.5mm} i = 0 \cdots d-1$.  Optimizing over an intermediate representation amounts to optimizing over $\R^{n_i}$ for some $i$, even though the resulting search space is not in the range of $g_{d-1} \circ \cdots \circ g_{1} \circ g_{0}$. These algorithm result in post-hoc optimization over a space that was not originally trained to be meaningful for the distribution of natural images upon which it was learned.  

\subsection{Intermediate Layer Optimization (ILO)~\cite{ILO2021}}
\label{sec:ILO}
Given a  trained generative model $G = g_1 \circ g_0$, Intermediate Layer Optimization (ILO) extends the range of the generative model by sequentially optimizing over each layer of the network, as demonstrated in Algorithm \ref{alg:ILO}.The initial step begins exactly as in \citep{bora2017compressed} by optimizing over the input vector $ z_{0} \in \R^{n_{0}}$. The solution is obtained in line 3, by initializing a $z_{0} \sim \mathcal{N}(0,\I_{n_{0}})$, then optimizing the loss with gradient descent. After the solution $\hat{z}_{0}$ is obtained, 
the algorithm searches for a perturbation $\hat{z_1}$ of $g_{0}(\hat{z}_{0})$ that further minimizes the reconstruction error (line 4).  The final approximation of the target image $x$ is given by $g_{1}(\hat{z}_{1} + g_{0} (\hat{z}_{0}))$.  As the authors point out, there are multiple ways where ILO can be regularized, including by an L1 penalty in the intermediate representation, or via early stopping. For the present paper, we present the method with early stopping as this was the method used by the publicly available code from the authors.  Throughout this paper, we use the code provided by the authors when solving ILO. 

\begin{algorithm}
\caption{Intermediate Layer Optimization (ILO) for Compressed Sensing \citep{ILO2021}.}
\begin{algorithmic}[1]
\State {\bfseries Input:} $G= g_{1} \circ g_{0}$, measurement matrix $\displaystyle A \in \R^{m \times n_{d}}$, compressed measurements $\displaystyle y$.
\State {\bfseries Output: } estimated image $\hat{x} $
\State $ \displaystyle \hat{z}_{0} = \underset{z_{0}}{\argmin} \hspace{2mm} \|y-A g_{1} \big( g_{0} (z_{0}) \big)\| $ \textit{Initialize at} $z_{0} \sim \mathcal{N}(0,\I_{n_{0}})$
\State $\hat{z}_{1} = \underset{z_{1}}{\argmin} \hspace{2mm} \|y-A g_{1}(z_{1} + g_{0} (\hat{z}_{0}) )\| $ 
\State {Return:} {$ \hat{x} = g_{1}(\hat{z}_{1} + g_{0} (\hat{z}_{0})  )$}
\end{algorithmic}
\label{alg:ILO}
\end{algorithm}

\begin{figure}[h!]
	\begin{center}
	\includegraphics[width=1\linewidth]{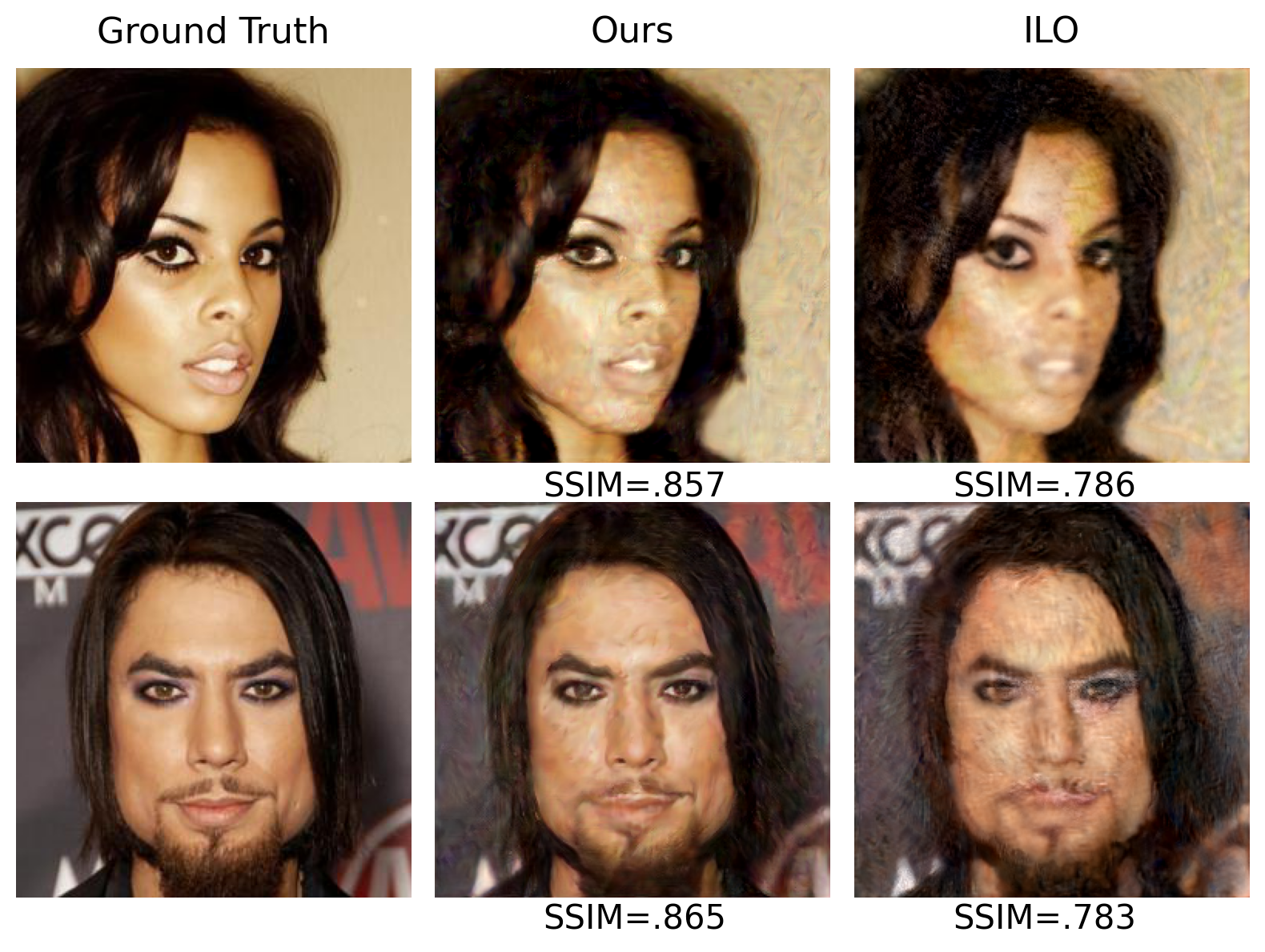}
	\end{center}
	\vspace{-.5cm}
	\caption{Comparison between ILO-RTIL (ours) and ILO for compressed sensing for 3\% of measurements. }
	\label{ILO_CS}
\end{figure}

\subsection{Multi-Code GAN (mGanPrior)}
\label{sec:mGANpiror}

Multi-Code GAN Prior is an inversion method that simultaneously optimizes over multiple latent codes and composes their corresponding intermediate feature maps with adaptive channel importance. This effectively extends the expressivity of the network by giving it a higher dimensional input space and additional parameters to control the importance of each channels in the intermediate layer representation. 

Assume we are given a pre-trained generative model $G = g_1 \circ g_0$, where the output of the first layer $g_0$ has dimension $H_1 \times W_1 \times C_1$ with $C_1$ being the number of channels.  Furthermore, chose $N$ latent codes $\{z_0^k\}_{i=1}^N \in \R^{n_0}$ and channel importance $\{\alpha^k\}_{i=1}^N \in \R^{C_1}$. Then the Multi-Code GAN Prior extended architecture computes $g_{1}(\sum_{k=1}^{N}g_{0}({z}_{0}^{k}) \odot {\alpha}^{k})$ where $\{g_{0}({z}_{0}^{k}) \odot {\alpha}^{k}\}_{ijc} = \{g_{0}({z}_{0}^{k})\}_{ijc} \cdot  \{\alpha^{k}\}_c$ is channel-wise multiplication, $i, j$ are spatial location, and $c$ is the channel index. 

During the inversion phase the Multi-Code GAN Prior method optimizes over both the latent vectors $\{z_0^k\}_{i=1}^N$ and the channel importance $\{ \alpha^k \}_{i=1}^N$ (Algorithm \ref{alg:mGAN}).

%the original architecture by 

%prior begins by selecting a number ectors $\{\alpha^k\}_{i=1}^N \in \R^{c_1}$. 

%Multi-Code Gan prior begins by selecting an $i$-th intermediate layer, $g_{i} \hspace{1.5mm} \text{for} \hspace{1.5mm}  i=0,1 \cdots l \cdots d-1$, and pushing $n$ many latent vectors  $\{z^{k}\}_{k=1}^{n} \in \R^{n_{0}} $ through the network up to the $l$-th layer, this gets mapped to some intermediate representation $n$-times, denoted as $\displaystyle \{g_{l}^{k}\}_{k=1}^{n} \in \R^{n_{l}}$. These representations will be weighted as a linear combination with a vector, $\{ \alpha_{k}\}_{k=1}^{n} \in \R^{n_{l}}$, to optimize freely over the channel importance of each intermediate representation. To simply notation define the multi-code GAN $G_{l}^{n} : \prod_{k=1}^{n} \R_{k}^{n_{0}} \times \prod_{k=1}^{n} \R_{k}^{n_{0}} \mapsto \R^{n_{d}}$, \ph{no subscript k.  see handwritten notes} where n denotes the number of latent vectors and channel importance vector pushed forward through the network to $l$-th layer. More concretely, this approach is outlined in algorithm \ref{alg:mGAN} for compressed sensing, with a three layer generative model, and the intermediate layer chosen is at $l=1$.

\begin{algorithm}%[H]
\caption{Multi-Code GAN (mGANprior) \cite{Gu_2020_CVPR}.}
\label{alg:mGAN}
\begin{algorithmic}[1]
\label{ilo_alg}
\State {\bfseries Input:} Trained network $G= g_{1} \circ g_{0}$, latent codes $\{z_{0}^{k}\}_{k=1}^{N} \in \R^{n_0} $ , $\{\alpha^{k}\}_{k=1}^{N} \in \R^{n_1} $, measurement matrix $A \in \R^{m \times n_{d}}$, compressed measurements $y$.
\State {\bfseries Output: } estimated image $\hat{x} $
\State \textit{Initialize } $ \{z_{0}^{k}\}_{k=1}^{N} \sim p_{z}, \{\alpha^{k}\}_{k=1}^{N}  \sim p_{\alpha}$ 
\State $ \hat{z},\hat{\alpha} = \hspace{-3mm} \underset{\{z_{0}^{k}\}_{k=1}^{N} , \{\alpha^{k}\}_{k=1}^{n} } {\argmin} \hspace{-3mm}\| y - Ag_{1}(\sum_{k=1}^{n}g_{0}(z_{0}^{k}) \odot \alpha^{k}) \| $
\State {Return:} {$ \hat{x} = g_{1}(\sum_{k=1}^{N}g_{0}(\hat{z}_{0}^{k}) \odot \hat{\alpha}^{k})$}
\end{algorithmic}
\end{algorithm}

\vspace{-.75cm}
\section{RTIL}
\vspace{-.5cm}

\begin{figure}[h!]
	\begin{center}
	\includegraphics[width=1.0\linewidth]{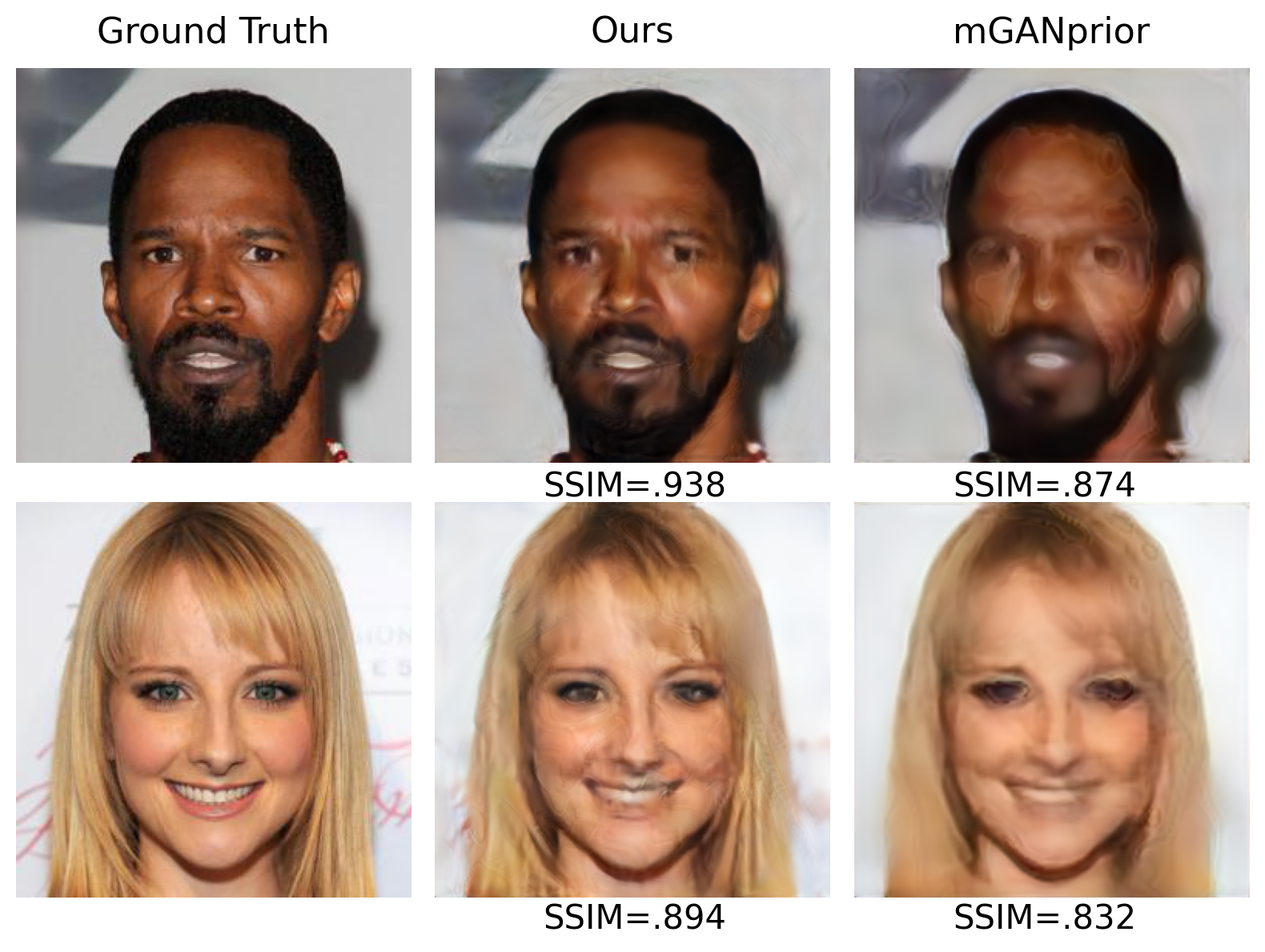}
	\end{center}
	\vspace{-.5cm}
	\caption{Comparison between mGANprior-RTIL (ours) and mGAN for compressed sensing  for 5\% of measurements. }
	\label{mGANprior_cs}
\end{figure}
\vspace{.25cm}
In this section, we present how the principle of   Regularized Training of Intermediate Layers (RTIL) inspires training algorithms for GANs that are intended for inversion by Intermediate Layer Optimization (Section \ref{subsec:RTIL-ILO}) and  mGANprior (Section \ref{subsec:RTIL-mGANprior}).
  
%We show that intermediate layers can be effectively regularized by training multiple generative models at the same time, an underlying vanilla model and one or more variations where noise is injected at one or more intermediate layers, with parameter sharing across each model.
%We next overview training and inversion based on our RTIL principle for generic intermediate optimization inversion algorithmic. In .  

We consider the case of a practitioner having chosen, after some initial exploration, a base generative network $\G$ for use as prior in solving inverse problems. For simplicity in this section we will consider $\G = g_1 \circ g_0$ where $g_0 : \R^{n_0} \to \R^{n_1}$ and $g_1: \R^{n_1} \to \R^{n_d}$. 
The input latent vectors $z_0$ of the network $G_{\theta}^{0}$ are sampled from $p_{z_{0}}$ (e.g. $p_{z_0} = \mathcal{N}(0,\I_{n_{0}})$) and $\theta$ is the set of trained parameters. 
%For brevity, we write $G_{\theta}^{0} (p_{z_{0}})$ as the generative model. 

We assume, furthermore, that the practitioner has selected an inversion algorithm that optimizes over the latent variable $z_0$ and the intermediate layer between $g_0$ and $g_1$ of $\G$. Optimizing over the intermediate layer corresponds to introducing a free variable $z_1$ between $g_0$ and $g_1$  (Figure \ref{fig:rtil_general} left).

The RTIL principle states that if one intends to solve inverse problems by means of intermediate layer optimization algorithms, then 
\textit{intermediate layers optimized over during inversion should be regularized during training}.

This principle can be used to design a new training algorithm in the following manner. We identify the additional free variable, $z_1$, used for optimization over the intermediate layer, consider it as a latent variable of the generative model, and provide it with a simple distribution $p_{z_{1}}$. For example, this could result in a generative model $\Gtil : \R^{n_0 \times n_1} \to \R^{n_d}$, such that $\Gtil (z_0 , z_1) = g_1 (z_1 + g_0(z_0))$, but could have alternative functional forms. This generative model reduces to the base generative model $\G: \R^{n_0} \mapsto \R^{n_d} $ if $z_1$ is suitably chosen, for example if $z_1=0.$ %If $z_{1} \sim p_{z_{1}}$ then the there exist a higher dimensional generative model $G_{\theta}^{1} : \R^{n_0 \times n_1} \to \R^{n_d} $ 
. %For a concrete example, this variant of RTIL considers element wise addition to illustrate how an intermediate latent variable is introduced to the model. 

%For ease of exposition, we write this as 
%$G_{\theta} (p_{z_{0}},0) \equiv G_{\theta}^{0} (p_{z_{0}})$. See Figure~\ref{fig:rtil_general}. 
%, and therefore the intermediate layer, is regularized by training a new generative network based on the inversion method chosen. The latter is in particular used to inform the choice of the latent distribution $p_{z_{1}}$ from which the latent variables $z_1$ are sampled during training. 
The introduction of the latent variable $z_1$ explicitly increases the dimensionality of the latent space. In practice, training latent variable models with high-dimensional latent spaces can be challenging and require careful regularization ~\citep{athar2018latent}. We address this difficulty by concurrently training the lower and higher dimensional models $\G_{\theta}$ and $\Gtil_{\theta}$, which share trainable weights $\theta$ (Figure \ref{fig:rtil_general} right).

We train $\G_{\theta}$ and $\Gtil_{\theta}$ via the following minimax formulation~\citep{NIPS2014_5ca3e9b1}
\begin{comment}
\begin{align*}
%\label{eq:min-max}
\min_{G_{\theta}}^1 \max_{D_{\Theta}} V(D_{\Theta},G_{\theta}) 
&=  \E_{ x \sim p_{x}} [ \log D_\Theta(x) ] \notag \\ \ \ \ \ \ \ \ \  + \E_{\substack{ z_{0} \sim p_{z_{0}} \\  z_{1} \sim p_{z_{1}} }}   
 &\big[ \frac{1}{2}  \log(1-D_\Theta(G_{\theta}(p_{z_{0}},0)) \\&+ \frac{1}{2}\log(1-D_\Theta(G_{\theta}(p_{z_{0}},p_{z_{1}})) \big],\notag
\end{align*}
\end{comment}
\hspace{-2cm}
\begin{align*}
%\label{eq:min-max}
\min_{\theta} &\max_{\Theta} \,\E_{ x \sim p_{x}} [ \log D_\Theta(x) ] \\ 
&+ \frac{1}{2} \E_{\substack{ z_{0} \sim p_{z_{0}} \\  z_{1} \sim p_{z_{1}} }}   
\big[  \log(1-D_\Theta(\G_{\theta}({z_{0}})) \\ 
&\qquad+ \log(1-D_\Theta(\Gtil_{\theta}({z_{0}},{z_{1}})) \big],
\end{align*}
where $D_\Theta: \R^{n_d} \to \R$ is the discriminative network.  This method could be extended to alternative GAN and non-GAN formulations ~\citep{arjovsky2017wasserstein,gulrajani2017improved,bojanowski2018optimizing}.
%attempts to distinguish between the generated samples from $G_{\theta}(p_{z_{0}},0)$ and $G_{\theta}(p_{z_{0}},p_{z_{1}})$, and the training data. 

%Note that because of the weight sharing, the computational cost of training the family of generative networks remains essentially the same as the original base network $G_{\theta}^{0}$.

Once $\Gtil$ is trained, a practitioner  could solve an inverse problem with forward operator $\mathcal{A}$ by solving
%would then use $G_{\theta}(p_{z_{0}},p_{z_{1}})$, the network with the largest number of latent variables and more expressivity, for solving inverse problems.
%Specifically, for an inverse problem with forward a, the originally chosen inversion method  is applied
%on the following minimization problem
\begin{equation*}%\label{eq:min_G2}
\hat{{z}}_{0}, \hat{{z}}_{1} = \underset{z_{0}, z_1}{\argmin} \hspace{2mm} \|y_{0}- \mathcal{A}(\Gtil (z_{0}, z_{1})) \|,
\end{equation*}
using the selected optimization algorithm, resulting in $\Gtil (\hat{z}_{0}, \hat{z}_{1})$ as the estimate of the signal.

In this section, we considered the case where only one intermediate layer was optimized.  The proposed method can directly extend to the case where multiple layers are optimized.

% We conclude noticing that the existing inversion methods that optimize over $z_0, z_1$ and use $G^{0}$, violates the generative prior assumption when solving inverse problems, as $z_1$ was not explicitly regularized during training. In contrast, our proposed RTIL guarantees that during the inversion phase  $G_\theta$ preserves the  generative prior assumption, and as shown in the experiments, this leads to improved reconstruction performances.

\begin{figure}
\includegraphics[width=\linewidth,height=4cm]{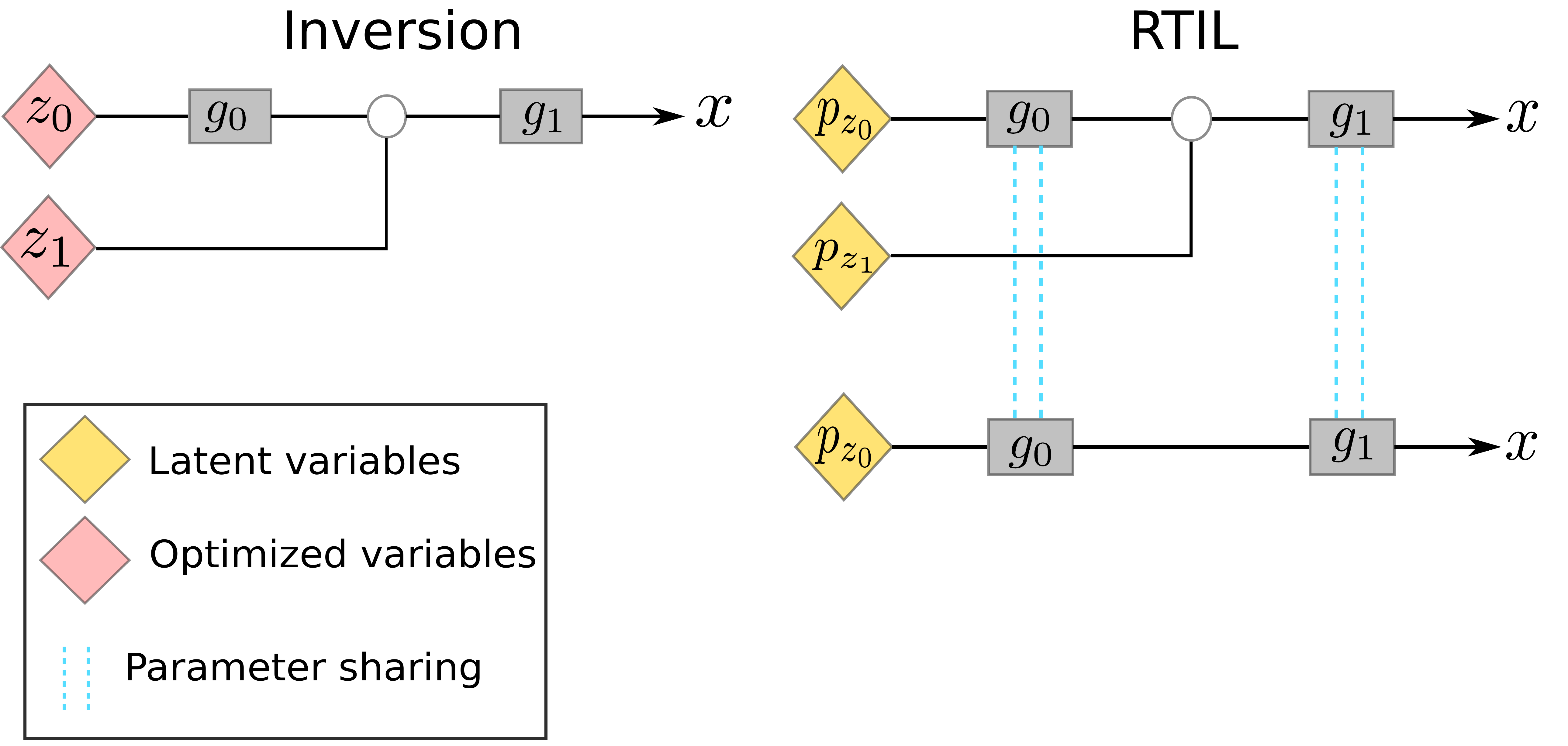}
\vspace{-.5cm}
\caption{Visual representation of the RTIL principle.  An  inversion method that optimizes over intermediate layers informs the training of a family of generative networks adapted to that inversion method.}
\label{fig:rtil_general}
\end{figure}

We will next describing the details of the application of RTIL in the case of Intermediate Layer Optimization and the Multi-Code GAN Prior.

\subsection{RTIL for ILO}\label{subsec:RTIL-ILO}

\begin{figure*}[h!]
\includegraphics[width=\linewidth]{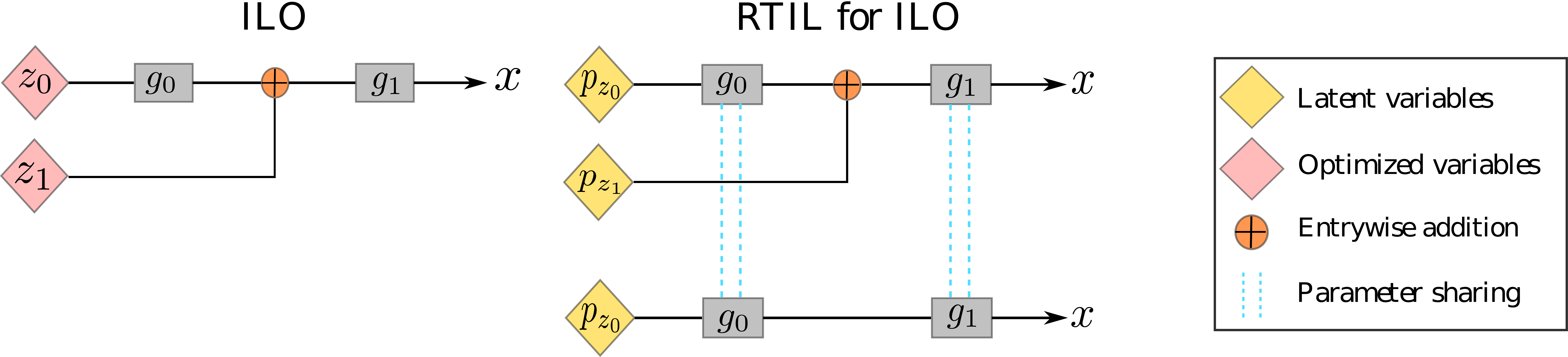}
\vspace{-.5cm}
\caption{ The left side portrays ILO, and the right side demonstrates how this inspire the training of a family of generative models with ILO-RTIL with parameters sharing.}
\label{fig:rtil_ILO}
\end{figure*}

In the case of Intermediate Layer Optimization (ILO), we now present how to use RTIL to design a GAN training algorithm.  Consider the base generative network $\G (z_0) =g_{1} ( g_{0} (z_0))$, to be used for inversion with Intermediate Layer Optimization. ILO (Algorithm \ref{alg:ILO}) extends the range of the network by optimizing  $g_{1}(z_{1} + g_{0} ({z}_{0}) )$ over latent variables $z_0$ and $z_1$. Consequently, we consider the higher dimensional model generative model
\begin{align*}
    \Gtil(z_0, z_1) &= g_{1}(z_{1} + g_{0} ({z}_{0}) )\\
    z_0 &\sim \mathcal{N}(0, I_{n_0})\\
    z_1 &\sim \mathcal{N}(0, \sigma^{2} I_{n_1}),
\end{align*}
where $\sigma^2$ is a hyperparameter. Then simultaneously the lower dimensional model $\G(z_{0})$ and higher dimensional model $\Gtil(z_{0},z_{1})$ are trained with the min max formulation above. Note $z_0 \sim p_{z_0}$ and  $z_1 \sim p_{z_1}$ where Figure \ref{fig:rtil_ILO} depicts this process.

\subsection{RTIL for mGANprior}\label{subsec:RTIL-mGANprior}

\begin{comment}
\begin{figure*}[h]
\def\svgwidth{\linewidth}
\import{figs/mGANprior}{mgan-rtil.pdf_tex}
\caption{The left side portray vanilla training for mGANprior, and the right side demonstrates RTIL. For  mGANprior-RTIL this example with $N = 2$ latent codes, where the top model trains a vanilla model analogous to model on the left, then parameter shares with the model below to induce a distribution in the intermediate layers. }
\label{fig:rtil_mgan}
\end{figure*}
\end{comment}

In the case of Multi-Code GAN Prior method, we now present how to use RTIL to design a GAN training algorithm.  Consider the base generative network $G^{0}_{\theta}$
to be used for inversion with the Multi-Code GAN Prior method. The mGANprior Algorithm (Algorithm \ref{alg:mGAN}) extends the range of the network by optimizing  $g_{1}(\sum_{k=1}^{N}g_{0}({z}_{0}^{k}) \odot {\alpha}^{k})$ over latent variables $z_0$ and $z_1$. Consequently, training the higher dimensional model yields
\begin{align*}
\Gtil ({z_0^1}, \dots, {z_0^N}, {\alpha^1}, \dots, {\alpha^N }) & =  g_{1}(\sum_{k=1}^{N}g_{0}({z}_{0}^{k}) \odot {\alpha}^{k}) \\
z^{k}_0 &\sim \mathcal{N}(0, I_{n_0})\\
p_{\alpha'} &\sim \text{Dir}_N(1)
\end{align*}

%generative models $ \{G_{\theta}(p_{z_{0}},0), G_\theta(p_{z_0^1}, \dots, p_{z_0^N}, p_{\alpha^1}, \dots, p_{\alpha^N }\}$, where the first model of the family is given by $G_{\theta} (p_{z_{0}},0)=g_{1}(g_{0}(z_{0}))$, and the second model is given by $ G_\theta(p_{z_0^1}, \dots, p_{z_0^N}, p_{\alpha^1}, \dots, p_{\alpha^N }) = g_{1}(\sum_{k=1}^{N}g_{0}({z}_{0}^{k}) \odot {\alpha}^{k})$. 
 Specifically,  a vector $\alpha' \in \R^{N}$ is sampled from $p_{\alpha'} \sim \text{Dir}_N(1)$, where $\text{Dir}_N(1)$ is the flat Dirichlet distribution, i.e.\ the uniform distribution over the $(N-1)$-dimensional simplex. Then each vector $\alpha^k$ is taken to be $\alpha^k = \{\alpha'\}_k \cdot \bm{1}$ where $ \bm{1}$ is the vector of all ones and $ \{\alpha'\}_k$ is the $k$-th entry of the vector $\alpha'$. Note this leads to each channel being weighted equally during training. This process can be depicted in the appendix Figure \ref{fig:rtil_mgan}.

\vspace{-.3cm}
\section{Experiments}
\vspace{-.15cm}
\begin{figure*}[h!]
	\begin{center}
	\includegraphics[width=1.0\linewidth,height=6cm]{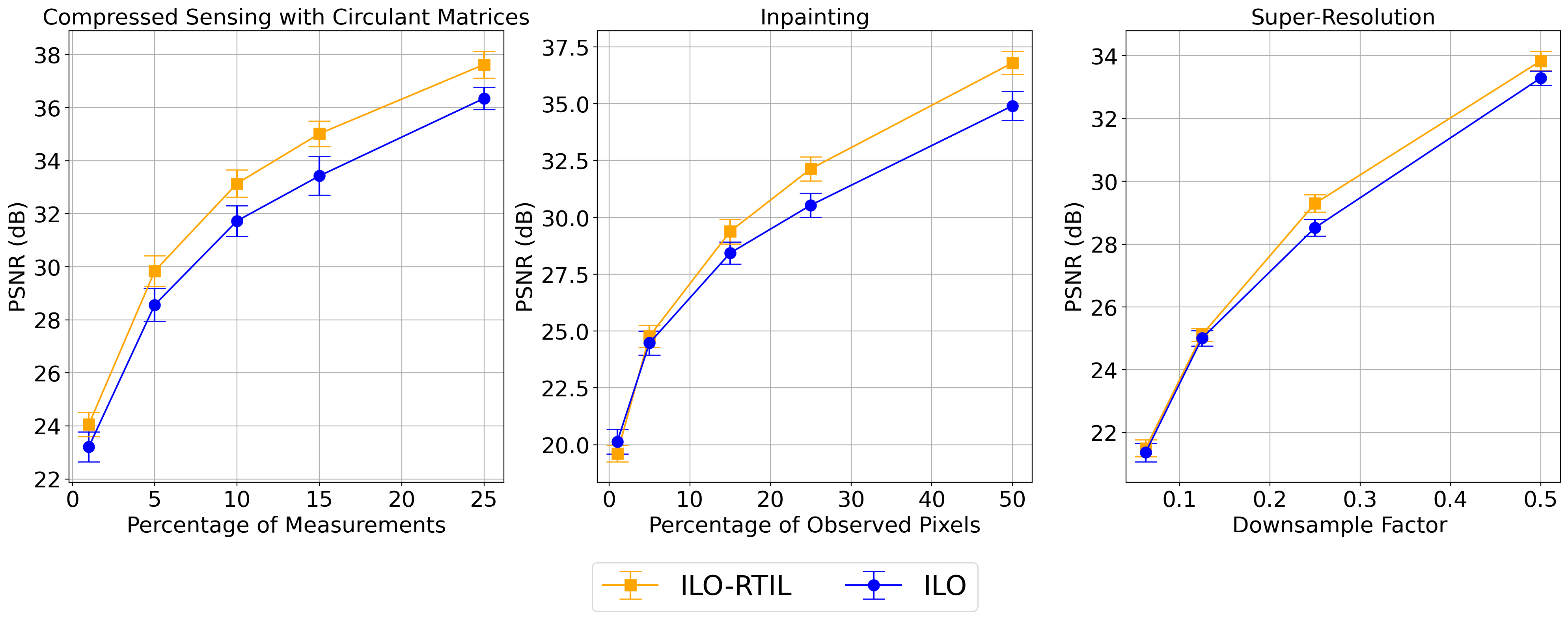}
	\end{center}
	\vspace{-.5cm}
	\caption{Performance of ILO-RTIL and vanilla trained ILO for Compressed sensing, inpainting, and super resolution for various under-sampling ratios. ILO-RTIL increases performances or ties for each under-sampling ratio with respect to PSNR across each inverse problems compared to ILO. The vertical bars indicate 95\% confidence intervals.  }
	\label{fig:RTIL-vs-ILO}
\end{figure*}

We observe that Regularized Training of Intermediate Layers is not tied to any specific architecture or training procedure, and while, until this point, we have only presented it on two-layer networks for easiness of exposition, in this section we demonstrate its successful application on two state of the art architectures on different imaging recovery problems. Specifically, we conduct extensive experiments comparing RTIL versus vanilla training for compressed sensing, inpainting, and super-resolution, and for two different inverse methods Multi-Code GAN Prior (Section \ref{sec:mGANpiror}) and Intermediate Layer Optimization (ILO) (Section \ref{sec:ILO}) to demonstrate the effectiveness of our method. The generative model architecture used for the mGANprior  is PGGAN \citep{karras2018progressive} and for ILO is styleGAN-2~\citep{karras2020analyzing}. All models were trained on FFHQ data set~\citep{karras2019style} and tested on CelebA-HQ dataset~\citep{karras2018progressive}, at an image size of 256x256x3. The choice of architectures were based on the experimental section of the original ILO \citep{ILO2021} and mGANprior \citep{Gu_2020_CVPR} papers. Refer to the appendix for the architecture details for training networks using RTIL, as well as hyperparameters chosen for inversion methods.

For all experiments, compressed sensing use partial circulant measurement matrices with random signs~\cite{ILO2021}, inpainting the pixels are missing at random, and downsample factor corresponds to how much the height and width of the original image was reduced to. 

\subsection{Results ILO-RTIL}
\label{sec:exp-ILO-RTIL}
Results in this section correspond to Figure \ref{fig:RTIL-vs-ILO}, where the results are average over three test sets of 12 images randomly sampled from CelebA-HQ. 

\underline{Compressed Sensing } - ILO-RTIL demonstrates an increase in reconstruction performance across each under-sampling regime. The largest increase performance with respect to PSNR occur at 25\% (1.28 dB), 15\% (1.59 dB), and  10\% (1.3 dB's) measurements. For qualitative results please refer to Figure \ref{ILO_CS}.
\underline{Inpainting}- ILO-RTIL demonstrates in increase in reconstruction performance with 4 out of the 5 sampling regimes. There is increase in PSNR at 50\% (1.88 dB), 25\% (1.59 dB's), 15\% (.94 db's), and 5\% (.29 db) of observed pixels. At 1\% of observed pixels there is a decrease in performance in reconstruction  of .5 dB's, but the error bars overlap each other indicating there is no significant improvement. For qualitative results please refer to Figure 7. \underline{Super-Resolution} -  ILO-RTIL shows a slight increase in performance over the entire measurement regime compared to ILO, the most significant occurs at $\frac{1}{4}$ downsampling factor where on average the increase in reconstruction is .77 dB's and the 95\% error bars are separated. All other sampling regime achieve comparable performance between ILO-RTIL and ILO. For qualitative results please refer to the appendix Figure~\ref{fig:ilo_sr}. 
\vspace{-.3cm}
\subsection{Results RTIL-Multi-Code}
\label{sec:RTIL-multi}
Results in this section correspond to Figure \ref{fig:RTIL-vs-mGAN}, where the results are average over five test sets of 12 images randomly sampled from CelebA-HQ. All experiments use $N=20$ latent codes
\underline{Compressed Sensing }- mGANprior-RTIL has a significant improvement in reconstruction performance over mGANprior. For each under-sampling ratio our method increases performance by at least 1.6 dB's, which clearly separates the error bars. On the other hand,  at  $1\%$ of measurements where both methods achieve the similar performance, which can seen by overlapping error bars. For qualitative results refer to Figure \ref{mGANprior_cs}.\underline{Inpainting}- mGANprior-RTIL demonstrates a significant improvement in reconstructing over mGANprior clearly being able the error bars. Our method yields a increase in performance with respect to PSNR of 2.5 dB's at 50\%, 2.23 dB's 30\%, 1.99 dB's at 20\%, and 1.58 dB's at 10\% observed pixels. For 1\% of observed pixels these both methods yield similar results. Lastly, qualitative results can be seen on Figure \ref{fig:qual_inp_mgan}.\underline{Super-Resolution}- mGAN-RTIL show substantial improvement with image reconstruction at downsampling factor of $\frac{1}{2}$ and $\frac{1}{4}$, increase of 2.25 db's and 1.51 dB's respectively. However, at $\frac{1}{16}$ down-sampling factor mGAN out performs RTIL-mGAN slightly, but within the error bars. For qualitative results please refer to the appendix Figure~\ref{fig:mgan_sr}. 
\begin{figure*}[ht!]
	\begin{center}
	\includegraphics[width=1\linewidth,height=6cm]{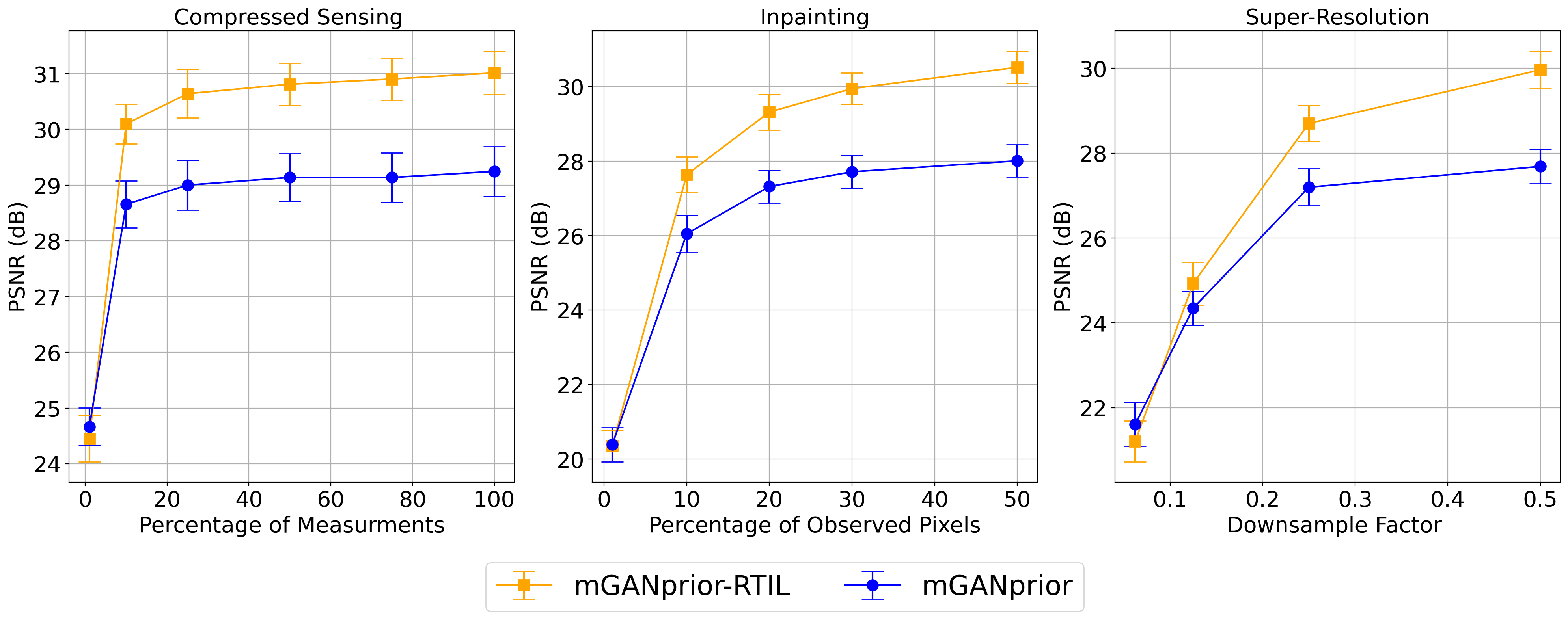}
	\end{center}
	\vspace{-.5cm}
	\caption{Performance of mGANprior-RTIL and vanilla trained mGAN for compressed sensing, inpainting, and super resolution for various under-sampling ratios. mGAN-RTIL increases performances over vanilla mGAN with respect to PSNR over each under-sampling ratio 
	by a noticeable margin, except for super-resolution problems at low under-sampling ratio's. The vertical bars indicate 95\% intervals. }
	\label{fig:RTIL-vs-mGAN}
\end{figure*}
\subsection{Ablation Study}
This section provides two ablations studies, comparing vanilla training versus RTIL for each intermediat layer trained using ILO-RTIL and mGANprior-RTIL. Each experiment was test on 5 images sampled randomly from CelebA-HQ. 
\subsubsection{ILO-RTIL}
\label{sub:ilo-rtil}
All results refer to Figure \ref{fig:abl-ILO}, recall Figure \ref{fig:rtil_ILO} and Alogirthm \ref{alg:ILO} for notation, $z_{0}$ refers to optimizing over the initial latent vector~\citep{bora2017compressed}, $z_{1}$ denotes sequentially optimizing over the first two layers, and so until representation error  $z_{4}$.  

\underline{Compressed Sensing} - Overall, optimizing up to $z_{4}$ achieves the best performance for ILO-RTIL and ILO, in most under-sampling ratio's it outperforms and at worst ties compared to other intermediate optimization layers. Comparing ILO-RTIL to ILO, there is increase in reconstruction performance on average across each under-sampling for $z_{4}$ of 1.41 dB's, $z_{3}$ .81 dB's, $z_{2}$ .4 dB's, and $z_{1}$ .86 dB's. \underline{Inpaiting}-Refer to appendix for inpaiting results in Figure~\ref{fig:abl-ILO}.
\begin{comment}
\underline{Inpainting} - 
Overall, ILO-RTIL sequentially optimizing up to $z_{4}$ performs the best overall, outperforming other intermediate representations $z_{3},z_{2},z_{1}$ at 50\% observed pixels then achieving the slightly better performance at the under-sampling ratio's. Comparing ILO-RTIL to ILO, there is increase in performance on average between each experiment for $z_{4}$ 1.27 dB's , $z_{3}$ .55 dB's, $z_{2}$ .4 dB's, and $z_{1}$ 1.27 dB's. Moreverover, ILO-RTIL from $z_{4}$ to $z_{3}$ on average there is increase of performance of .5 dB's, where for vanilla trained there is a decrease of .28 dB's. 
\end{comment}
\subsubsection{mGANprior-RTIL}
\label{sub:mgan-abl}
\underline{Compressed Sensing} -
mGANpior-RTIL outperformed vanilla mGANpior in each optimization setting using $N=\{1,10,20\}$, marginal at $N=1$, but more noticeable at $N=\{10,20\}$. Overall $N=20$ achieves the  best performance for both mGANprior and mGANprior-RTIL training across each under-sampling regime.As well as mGANprior-RTIL increases performance on average across all the under-sampling experiments with $N=10$ compared to $N=20$ with mGANprior \underline{Inpaiting}-Refer to appendix for inpaiting results in Figure~\ref{fig:abl-mGAN}.

\vspace{-.1cm}

\begin{figure}[h!]
	\begin{center}
	\includegraphics[width=1\linewidth,height=8cm]{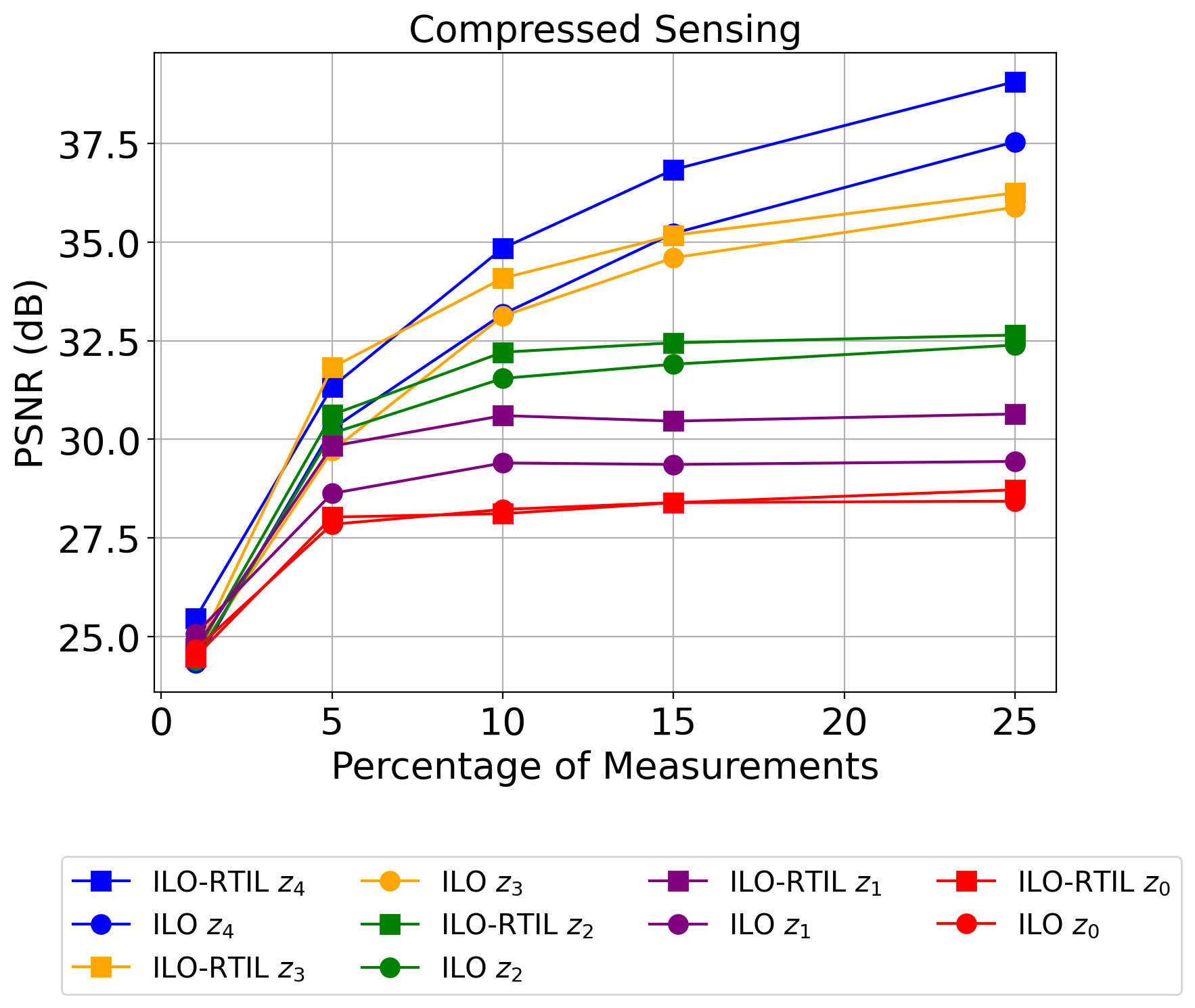}
	\end{center}
	\vspace{-.5cm}
	\caption{Comparing ILO-RTIL to ILO optimizing over various number of  intermediate layers. }
	\label{fig:abl-ILO}
\end{figure}
\section{Theoretical Model for Compressed Sensing with RTIL}

In this section, we examine a simple theoretical model for compressive sensing with RTIL.  To provide the most easily understandable context, we consider the case of a generative model given by a two-layer linear neural network.  Further, we consider the case where the generative model is trained in a supervised manner, and we consider the regime of infinite training data. {The supervised setting allows us to avoid discussing the different methods for learning generative models,  and working in the infinite data regime we avoid statistical estimation errors}.

%{\color{red} more comments on why we assume these?} 

We assume that the true signal distribution is given by
$x = G^\star(z_0, z_1) = \Wstar_1( \Wstar_0 z_0 + z_1 )$, 
where 
$G^\star: \R^{n_0} \times \R^{n_1} \to \R^{n_d}$, $z_0 \sim \mathcal{N}(0,I_{n_0})$ and $z_1 \sim \mathcal{N}(0,I_{n_1})$ drawn independently, and 
  where $\Wstar_1 \in \R^{n_d \times n_1}$ and $\Wstar_0 \in \R^{n_1 \times n_0}$. We moreover assume that $n_0 < n_1 < n_d$, and $\Wstar_0, \Wstar_1$ are full rank.  We furthermore assume that $\Wstar_0$ is known but that $\Wstar_1$ is unknown.
 
\subsection{Training the Models}  
We consider the training of two generative models.  The first model is analogous to vanilla GAN training without RTIL.  In it, $\Gvan (z_0) = \Wvan_{1} ( \Wstar_0 z_0)$.  %Note that the model does not depend on $z_1$.    
The second model is analogous to GAN training with RTIL.  In it, $\Grtil (z_0, z_1) = \Wrtil_{1} ( \Wstar_0 z_0 + z_1)$. 

Training each model consists of estimating $\Wvan_1$ and $\Wrtil_{1}$ under a least squares loss.  As we consider the idealized regime of infinite training data, we obtain the following estimates:
%$G^{(i)}: \R^{n_0} \times \R^{n_1} \to \R^{n_d}$ for $i = 1, 2$. Model $G^{(1)}$ is trained without regularizing the intermediate layer, while model $G^{(2)}$ is trained with regularization (RTIL). 
%To simplify the discussion, we will assume that $\Wstar_0 \in \R^{n_1 \times n_0}$ is known, the models are trained in a supervised way and the infinite data regime is well approximated. Specifically, $G^{(1)}: (z_0, z_1) \mapsto W_{1}^{(1)}(\Wstar_0 z_0 + z_1 )$
%where 
%%\begin{equation}\label{eq:W11}
%%    W_1^{(1)} \in \argmin_{W_1 \in \R^{n_d \times n_1}} \E_{\substack{z_0 \sim \mathcal{N}(0,I_{n_0}) \\ z_1 \sim \mathcal{N}(0,I_{n_1})}} \| G^\star(z_0, z_1) - G^{(1)}(z_0, 0) \|_2^2
%%\end{equation}
\begin{equation}\label{eq:W11}
    \Wvan_1 \in \argmin_{{W_1} \in \R^{n_d \times n_1}} \E_{z_0, z_1} \| G^\star(z_0, z_1) - {W_1} \Wstar_0 z_0 \|_2^2,
\end{equation}
and
%and $G^{(2)}: (z_0, z_1) \mapsto W_{1}^{(2)}(\Wstar_0 z_0 + z_1 )$ where 
\begin{equation}\label{eq:W12}
    \Wrtil_1 \in \argmin_{{W_1} \in \R^{n_d \times n_1}} \E_{z_0, z_1} \| G^\star(z_0, z_1) - {W_1}(\Wstar_0 z_0 + z_1) \|_2^2.
\end{equation}
%Notice how in \eqref{eq:W11} the intermediate layer variable is not regularized during training, while it is regularized in \eqref{eq:W12}, which corresponds to RTIL in this simplified model.

The next lemma is a simple consequence of the property of the Gaussian distribution. 
\begin{lemma}\label{lemma:training} 
Let $\Wvan_1$ satisfy in \eqref{eq:W11} then $\Wvan_1 \Wstar_0 = \Wstar_1 \Wstar_0$. 
Moreover, 
there exists a unique $\Wrtil_{1}$ solution of \eqref{eq:W12}, given by $\Wrtil_{1} = \Wstar_1$.
\end{lemma}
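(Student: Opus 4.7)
The plan is to compute the expected squared losses in \eqref{eq:W11} and \eqref{eq:W12} explicitly by expanding the squared norms and using independence of the Gaussian latent variables, then to identify the minimizers from the resulting quadratic forms.

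For \eqref{eq:W11}, I would substitute $G^\star(z_0,z_1) = \Wstar_1(\Wstar_0 z_0 + z_1)$, expand
\[
\| (\Wstar_1 - W_1)\Wstar_0 z_0 + \Wstar_1 z_1 \|_2^2,
\]
and take expectation. By independence of $z_0$ and $z_1$ and the fact that both are mean-zero with identity covariance, the cross term vanishes and the expectation splits as $\|(\Wstar_1 - W_1)\Wstar_0\|_F^2 + \|\Wstar_1\|_F^2$. Only the first term depends on $W_1$, and it is minimized (to zero) precisely when $W_1 \Wstar_0 = \Wstar_1 \Wstar_0$. Note uniqueness of $W_1$ does not hold here because $\Wstar_0 \in \R^{n_1 \times n_0}$ has $n_0 < n_1$, so $\Wstar_0$ has nontrivial left null space; this is exactly why the statement only asserts equality of the compositions $W_1 \Wstar_0$.

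For \eqref{eq:W12}, I would set $u = \Wstar_0 z_0 + z_1$ and rewrite the loss as $\E \|(\Wstar_1 - W_1) u\|_2^2 = \operatorname{tr}\bigl((\Wstar_1 - W_1)\, \Sigma_u\, (\Wstar_1 - W_1)^\top\bigr)$, where $\Sigma_u = \E[u u^\top] = \Wstar_0 (\Wstar_0)^\top + I_{n_1}$. The key observation is that $\Sigma_u \succeq I_{n_1} \succ 0$, so the quadratic form is strictly convex in $W_1$ and vanishes only when $\Wstar_1 - W_1 = 0$. This simultaneously establishes existence and uniqueness of the minimizer $\Wrtil_1 = \Wstar_1$.

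There is no real obstacle here; the computation is routine Gaussian moment manipulation. The only conceptual point worth emphasizing in the write-up is the contrast between the two cases: in \eqref{eq:W11} the effective design covariance is $\Wstar_0 (\Wstar_0)^\top$, which is rank-$n_0$ and hence singular on $\R^{n_1}$, yielding a nonunique minimizer determined only on $\operatorname{range}(\Wstar_0)$; whereas in \eqref{eq:W12} the additional $z_1$ injects the identity into the design covariance, making it full rank and pinning down $\Wrtil_1$ uniquely. This is precisely the mechanism by which RTIL constrains the learned weights beyond what vanilla training can, and it is what will later drive the reconstruction-error gap.
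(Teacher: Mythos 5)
Your proof is correct and takes essentially the same approach as the paper: expand the expected loss using independence of $z_0, z_1$ and Gaussian second moments, observe that for \eqref{eq:W11} the resulting quadratic form depends on $W_1$ only through $W_1 \Wstar_0$ (hence is singular with $n_0 < n_1$), while for \eqref{eq:W12} the effective covariance $\Wstar_0(\Wstar_0)^\top + I_{n_1}$ is positive definite, yielding the unique minimizer $\Wstar_1$. The paper writes this covariance out as the sum $\|(\Wstar_1 - W_1)\Wstar_0\|_F^2 + \|\Wstar_1 - W_1\|_F^2$ and reads off uniqueness from the second term, whereas you keep it as a single trace against $\Sigma_u$; the two are algebraically identical.
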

While the above lemma shows that $\Wvan_1$ will equal $\Wstar_1$ on the range of $\Wstar_0$, its behavior on the space orthogonal to the range of $\Wstar_0$ depends on how \eqref{eq:W11} is solved. To simplify the discussion below we will consider $\Wvan_1$ to be the minimum (Frobenius) norm solution of \eqref{eq:W11}. 
%For example, if \eqref{eq:W11} was solved by a gradient descent method, the restriction of $W_1^{(1)}$  on the space orthogonal to the range of $\Wstar_0$ would be fixed at initialization. 

\subsection{Compressed Sensing}
We next analyze the compressed sensing problem with the trained generative models.  Let $x^\star = G^\star(z_0^\star, z_1^\star)$ be an unknown signal, where $z_0^\star \sim \mathcal{N}(0,I_{n_0})$ and $z^\star_1 \sim \mathcal{N}(0,I_{n_1})$. We consider the problem of recovering $x^\star$ given compressed linear measurements $y = A x^\star \in \R^{m}$ where $n_1~\leq~m~<~n_d$. When using the base model $\Gvan$ and the higher dimensional model $\Gvantil$, we solve
\begin{equation*}
\begin{aligned}
    \zvan_{0} &= \argmin_{\hat{z}_0 \in \R^{n_0}} \| y - A \Gvan (\hat{z}_0) \|_2^2, \\
    \zvan_{1} &= \argmin_{\hat{z}_1 \in \R^{n_1}} \| y - A \Gvantil ( \zvan_{0} , \hat{z}_1)\|_2^2. \\
\end{aligned} 
\end{equation*}
We denote with $\Gvantil$ the vanilla model trained with \eqref{eq:W11}, where $z_1$ is introduced as an intermediate variable, $\Gvantil (z_0, z_1) = \Wvan_1( \Wstar_0 z_0 + z_1)$. When using the model $\Grtil $ trained with RTIL, we solve
\begin{equation*}
\begin{aligned}
    \zrtil_{0} &= \argmin_{\hat{z}_0 \in \R^{n_0}} \| y - A \Grtil (\hat{z}_0, 0) \|_2^2, \\
    \zrtil_{1} &= \argmin_{\hat{z}_1 \in \R^{n_1}} \| y - A \Grtil ( \zrtil_{0}  , \hat{z}_1)\|_2^2. \\
\end{aligned} 
\end{equation*}
Moreover, we take $\zrtil_{1}$ to be the minimum norm solution. As shown in the proof $\zvan_{0}$ and $(\zrtil_{0}, \zrtil_{1})$ are unique.
Notice that $(\zvan_{0},\zvan_{1})$ and $(\zrtil_{0}, \zrtil_{1})$ correspond to the latent vector and intermediate layer variables obtained applying the ILO Algorithm \ref{alg:ILO} with inputs  $\{\Gvantil, A, y \}$ and $\{\Grtil, A, y \}$ respectively. 

The following lemma quantifies the reconstruction errors made by using the two generative models. 

\begin{lemma}\label{lemma:CS}
Assume that $\Wstar_0, \Wstar_1$ are full rank. Let $\Wvan_1$ be the minimum norm solution of \eqref{eq:W11} and $\Wrtil_1$ be the solution of \eqref{eq:W12}. Let $A \in \R^{m \times n_d}$ with i.i.d. $\mathcal{N}(0,1)$ entries. Then with probability $1$
\begin{multline}\label{eq:ERR1}
    \E_{z^\star_0, z^\star_1} \big[\| G^\star(z^\star_0, z^\star_1) -  \Gvantil (\zhat_0, \zhat_1) \|^2 \big] \\ \geq \max_{h \in \text{range}(\Wstar_0)^\perp}\|(I_{n_d} - \mathcal{P}_{\Wstar_1 \Wstar_0} \big) \Wstar_1 h\|_2^2 > 0 
\end{multline}
and 
\begin{equation}\label{eq:ERR2}
    \E_{z^\star_0, z^\star_1} [\| G^\star(z^\star_0, z^\star_1) -  \Grtil (\ztil_0, \ztil_1) \|^2] =0.
\end{equation}
%where  $ \mathcal{P}_{W^{(1)}_1}$ is the orthogonal projector onto the range of $W^{(1)}_1$.
\end{lemma}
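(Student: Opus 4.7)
I would handle the two claims separately; both reduce to linear algebra once Lemma~\ref{lemma:training} is in hand. A preliminary observation used throughout is that, because $A$ has i.i.d.\ $\mathcal{N}(0,1)$ entries and $m \ge n_1 > n_0$, both $A\Wstar_1:\R^{n_1}\to\R^m$ and $A\Wstar_1\Wstar_0:\R^{n_0}\to\R^m$ are almost surely injective (their column spaces lie in fixed subspaces of $\R^{n_d}$ of the appropriate dimensions, on which a generic Gaussian $A$ with sufficiently many rows is injective). This immediately yields the uniqueness of $\zvan_0$ and $\zrtil_0$ asserted in the statement, and will be reused below.

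For \eqref{eq:ERR2}, Lemma~\ref{lemma:training} gives $\Wrtil_1=\Wstar_1$, so $\Grtil \equiv G^\star$ as a function. The choice $\hat z_1 := \Wstar_0(\zstar_0-\zrtil_0)+\zstar_1$ produces zero step-2 residual, so any minimizer $\zrtil_1$ must satisfy $A\Wstar_1(\Wstar_0\zrtil_0+\zrtil_1) = y$. Injectivity of $A\Wstar_1$ then forces $\Wstar_0\zrtil_0+\zrtil_1 = \Wstar_0\zstar_0+\zstar_1$ (which also yields uniqueness of $\zrtil_1$), and applying $\Wstar_1$ gives $\Grtil(\zrtil_0,\zrtil_1) = G^\star(\zstar_0,\zstar_1) = x^\star$ almost surely, so \eqref{eq:ERR2} holds.

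For \eqref{eq:ERR1}, combining Lemma~\ref{lemma:training} with the minimum-norm convention gives $\Wvan_1 = \Wstar_1 \mathcal{P}_{\text{range}(\Wstar_0)}$, so for every $(\zhat_0,\zhat_1)$ the reconstruction $\Gvantil(\zhat_0,\zhat_1) = \Wvan_1(\Wstar_0\zhat_0 + \zhat_1)$ lies in $\Wstar_1(\text{range}(\Wstar_0)) = \text{range}(\Wstar_1\Wstar_0)$. Pythagoras with respect to $\mathcal{P}_{\Wstar_1\Wstar_0}$ then yields
\[
\|x^\star - \hat x\|_2^2 \,\ge\, \|(I_{n_d} - \mathcal{P}_{\Wstar_1\Wstar_0}) x^\star\|_2^2 = \|(I_{n_d}-\mathcal{P}_{\Wstar_1\Wstar_0})\Wstar_1\zstar_1\|_2^2,
\]
where the equality uses $\Wstar_1\Wstar_0\zstar_0 \in \text{range}(\Wstar_1\Wstar_0)$. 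Taking expectation over $\zstar_1\sim\mathcal{N}(0,I_{n_1})$ gives $\|M\|_F^2$ with $M := (I_{n_d}-\mathcal{P}_{\Wstar_1\Wstar_0})\Wstar_1$. Since $Mh = 0$ for every $h\in\text{range}(\Wstar_0)$, this Frobenius norm dominates the operator norm of $M$ restricted to $\text{range}(\Wstar_0)^\perp$, which is exactly the stated maximum (read as being over unit-norm $h\in\text{range}(\Wstar_0)^\perp$). Strict positivity follows because $\Wstar_1$ is injective: if $\Wstar_1 h\in\text{range}(\Wstar_1\Wstar_0)$ then $h\in\text{range}(\Wstar_0)$, so any nonzero $h\in\text{range}(\Wstar_0)^\perp$ yields $Mh\ne 0$.

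The main obstacle I anticipate is the careful bookkeeping of the ``probability $1$'' injectivity statements for $A$ restricted to the fixed subspaces $\text{range}(\Wstar_1)$ and $\text{range}(\Wstar_1\Wstar_0)$; while routine, these are crucial both for the stated uniqueness claims and for the perfect-recovery argument in \eqref{eq:ERR2}. A secondary point of care is the interpretation of the $\max$ on the right-hand side of \eqref{eq:ERR1}: as the bound is finite and positive, it must be understood as an operator norm (unit-norm $h$), and the proof I sketched above reaches exactly this quantity via the inequality $\|M\|_F^2 \ge \|M\|_{\mathrm{op}}^2$ after expectation.
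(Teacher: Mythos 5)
Your proof is correct. For \eqref{eq:ERR2} you take essentially the same route as the paper: both reduce to the observation that $\Wrtil_1 = \Wstar_1$ makes $\Grtil$ identical to $G^\star$, exhibit a zero-residual point for the step-2 objective, and invoke almost-sure injectivity of $A\Wstar_1$ to force exact recovery. For \eqref{eq:ERR1}, however, your argument is genuinely different from, and cleaner than, the paper's. The paper explicitly computes the ILO iterates: it uses that $A\Wstar_1\Wstar_0$ is a.s.\ full rank to write $\zvan_0$ via a pseudoinverse, argues via the minimum-norm convention that $(I-\mathcal{P}_{\Wstar_0})\zvan_1 = 0$, arrives at $\Gvantil(\zvan_0,\zvan_1) = \Wstar_1(\Wstar_0 \zstar_0 + \Wstar_0\mathcal{M}_1\zstar_1)$ for an explicit $\mathcal{M}_1$, and then lower-bounds $\|\Wstar_1 - \Wstar_1\Wstar_0\mathcal{M}_1\|_F^2$ by minimizing over all $\mathcal{M}$ via first-order conditions. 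You instead note that $\Wvan_1 = \Wstar_1\mathcal{P}_{\Wstar_0}$ forces every $\Gvantil(\hat z_0,\hat z_1)$ into $\mathrm{range}(\Wstar_1\Wstar_0)$, and apply Pythagoras with $\mathcal{P}_{\Wstar_1\Wstar_0}$ directly. This bypasses the computation of the minimizers entirely, needs neither the almost-sure rank condition on $A$ nor the minimum-norm convention for $\zvan_1$, and in fact proves the slightly stronger claim that the lower bound holds for \emph{any} point in the range of $\Gvantil$, not just the ILO output. Both routes then pass from $\|(I-\mathcal{P}_{\Wstar_1\Wstar_0})\Wstar_1\|_F^2$ to the operator norm restricted to $\mathrm{range}(\Wstar_0)^\perp$; you additionally justify the strict positivity via injectivity of $\Wstar_1$ together with $n_1 > n_0$, a step the paper's proof leaves implicit.
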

The above results illustrate why training generative models using RTIL can enable better compressed sensing performance.  In the simplified setting of a two-layer linear neural network trained in a supervised manner with a known first layer and in the infinite data regime, we see that the lower dimensional generative model incurs error in the second layer's weights in the orthogonal complement of the range of the first layer.  This results in an increase in the reconstruction error when solving compressed sensing.

%{\color{red}MAYBE REMOVE} the effects of regularizing during training the intermediate layers variables of a generative model used for inverse problems. In particular, demonstrating that a model trained without intermediate variable regularization leads to larger reconstruction when solving compressed sensing, compared to a model trained with regularization. 

%{\color{red}MAYBE REMOVE} The effects of the regularized training is to regularize the behavior of the second layer $W_1$ in the directions orthogonal to the range of the first layer. These directions are not regularized by the standard training methods, but are used during inversion. 

\section{Conclusion}
\begin{comment}
\begin{figure}[h!]
	\begin{center}
	\includegraphics[width=\linewidth]{}
	\end{center}
	\caption{Fill in}
	\label{fig:abl-mGANprior}
\end{figure}
\end{comment}

We have introduced a principle for training GANs that are intended to be used for solving inverse problems.  That principle states that if the inversion algorithm optimizes over intermediate layers of the network, then during training the network should be regularized in those layers. We instantiate this principle for two recent and successful optimization algorithms, Intermediate Layer Optimization~\citep{ILO2021} and  the Multi-Code Prior~\citep{Gu_2020_CVPR}.  For both of these algorithm, we devise a new GAN training algorithm.   Empirically, we show our trained GANs allow better reconstruction in compressed sensing, inpainting, and super resolution across multiple under-sampling regimes, when compared to GANs trained in a vanilla manner. We note that our methodology only applies in the case of inversion methods that optimize over intermediate layers. However, there has been multiple competitive methods of this form published recently, each of which we show can benefit from this approach.  Tools like those proposed in this paper, provided sufficient computational resources, may allow these methods to be even more competitive in real-world applications. 
\begin{comment}
\vspace{-.2cm}
\begin{figure}[h!]
	\begin{center}
	\includegraphics[width=1\linewidth]{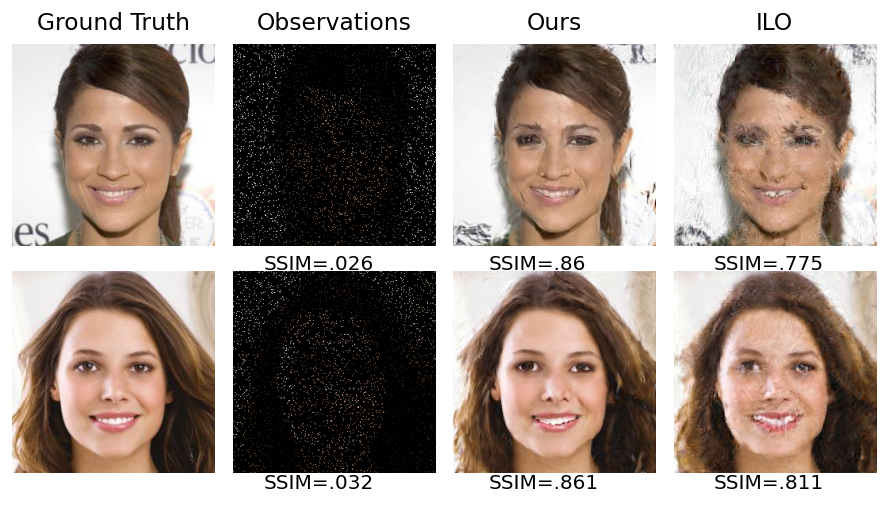}
	\end{center}
	\vspace{-.5cm}
	\caption{ Qualitative comparison between our method ILO-RTIL and ILO for inpainting at 5\% of observed pixels. }
	\label{fig:qual_inp_ilo}
\end{figure}

\begin{figure}[h!]
	\begin{center}
	\includegraphics[width=1\linewidth]{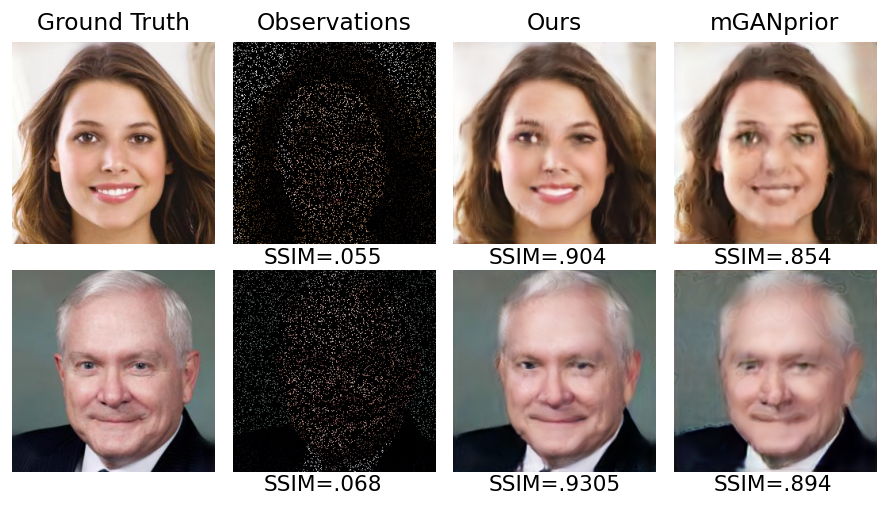}
	\end{center}
	\vspace{-.5cm}
	\caption{ Qualitative comparison between our method mGANprior-RTIL and mGANprior for inpainting at 10\% of observed pixels. }
	\label{fig:qual_inp_mgan}
\end{figure}
\end{comment}

 %The ablation studies indicate ILO-RTIL leads performance in PSNR with less computational cost for both inversion methods, i.e when comparing mGANprior-RTIL ten latent codes to mGANprior 20 latent codes and using less intermediate layers in ILO. 

%In addition, our experiments were done on images of size 256x256x3 due to computational concerns. Nonetheless, the paper motivates verification at larger image sizes and on generative models trained specifically for certain applications, such as MRI can benefit from the increase in performance. 

%\clearpage
\newpage
%\input{paper.bbl}
%\bibliographystyle{plain} 
%\bibliographystyle{authordate1}
%\bibliography{references.bib}

\newpage
\appendix
\onecolumn
\begin{alphasection}
\section{Appendix}

Code provided in supplementary folder. Computational requirements for this paper are two NVIDIA 2080 Ti GPU: training StyleGAN2 uses both GPU's and one GPU for inversion, training PGGAN requires one GPU and one for inversion. 
\subsection{RTIL-ILO Training Details }

All experiments for ILO inversion method used StyleGAN2 architecture~\citep{karras2020analyzing}, both vanilla and RTIL models were trained for 700,000 iterations using the same training parameters, i.e., learning rate, batch size, and regularization updates. 
Below table \ref{table:stylegan2} outlines the macroscopic view of the StlyeGAN2 architecture. Please refer StlyeGAN2 paper~\citep{karras2020analyzing} for more architecture details, between each convolutional layer there normalization operation called weight demodulation. During training for RTIL the distribution was induced after each block in the network, which corresponds to a cells 2-6 in table \ref{table:stylegan2} up to 4-th convolutional block. 

All experiments for ILO inversion method used StyleGAN2 architecture~\citep{karras2020analyzing}, both vanilla and RTIL models were trained for 700,000 iterations using the same training parameters, i.e., learning rate, batch size, and regularization updates. %If interested p
Please refer to the code for more details on the training process and the StlyeGAN2 paper~\citep{karras2020analyzing} for more details on the architecture. 
Below Table \ref{table:stylegan2} outlines the macroscopic view of the StlyeGAN2 architecture, between each convolutional layer there is a normalization operation called weight demodulation. During training for RTIL the additional latent variables were added after each block in the network, which correspond to cells 2-6 in Table \ref{table:stylegan2} up to 4-th convolutional block. 

\begin{table}[h!]
\centering
\caption{StyleGan2 for image size $256 \times 256 \times 3$}
\begin{tabular}{ | p {5cm} | p {3cm} | p {3cm} | p {3cm} |}
\hline
\multicolumn{3}{ | c | }{\textbf{Generator} } \\
\hline
Operation & Activation & Output Shape \\
\hline
Latent Vector & None & $512 \times$ 1 $\times 1$ \\
$8\times $ MLP (Mapping Network) & LRelu & $512 \times 14$\\

\hline
Constant input & None & $512 \times 4 \times 4$ \\
Conv $3\times 3$ & LRelu & $256 \times 4 \times 4$\\

\hline
Upsample & None & $256 \times 8 \times 8 $ \\
Conv $3\times 3$ & LRelu & $256 \times 8 \times 8$\\
Conv $3\times 3$ & LRelu & $256 \times 8 \times 8$\\
\hline
Upsample & None & $256 \times 16 \times 16 $ \\
Conv $3\times 3$ & LRelu & $256 \times 16 \times 16$\\
Conv $3\times 3$ & LRelu & $256 \times 16 \times 16$\\

\hline
Upsample & None & $256 \times 32 \times 32 $ \\
Conv $3\times 3$ & LRelu & $256 \times 32 \times 32$\\
Conv $3\times 3$ & LRelu & $256 \times 32 \times 32$\\
\hline
Upsample & None & $256 \times 64 \times 64 $ \\
Conv $3\times 3$ & LRelu & $256 \times 64 \times 64$\\
Conv $3\times 3$ & LRelu & $256 \times 64 \times 64$\\
\hline
Upsample & None & $128 \times 128 \times 128 $ \\
Conv $3\times 3$ & LRelu & $128 \times 128 \times 128$\\
Conv $3\times 3$ & LRelu & $128 \times 128 \times 128$\\
\hline
Upsample & None & $128 \times 256 \times 256 $ \\
Conv $3\times 3$ & LRelu & $64 \times 256 \times 256$\\
Conv $3\times 3$ & LRelu & $64 \times 256 \times 256$\\
Conv $1 \times 1$ & Linear & $3 \times 256 \times 256$ \\
\hline
\multicolumn{3}{ | c | }{\textbf{Trainable Parameters} : 12,300,877 } \\
\hline 
\end{tabular}
\label{table:stylegan2}
\end{table}
\newpage

\begin{figure}[h!]
	\begin{center}
	\includegraphics[width=.9\linewidth]{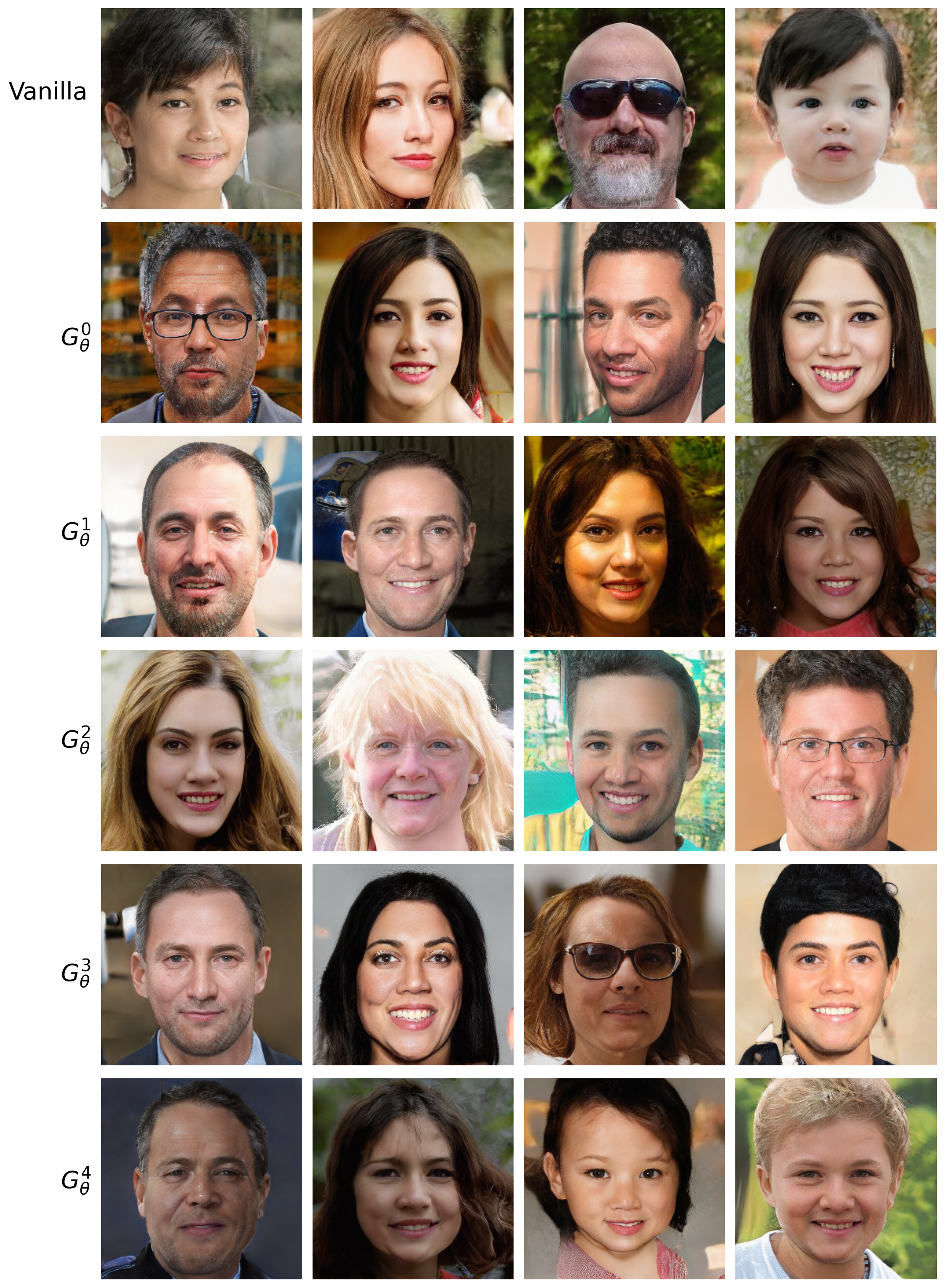}
	\end{center}
	\caption{Samples from each of our trained generative models. Top row consist samples from vanilla trained StyleGAN2, then rows $G_{\theta}^{0} \cdots G_{\theta}^{4}$ are samples from family of generative models trained with RTIL. }
	\label{fig:samples}
\end{figure}

\newpage
\subsection{RTIL-mGANprior Training Details}
\begin{figure*}[H]
\def\svgwidth{\linewidth}
\import{figs/mGANprior}{mgan-rtil.pdf_tex}
\caption{The left side portrays vanilla training for mGANprior, and the right side demonstrates RTIL. For  mGANprior-RTIL this example with $N = 20$ latent codes, where the top model trains a vanilla model analogous to model on the left, then parameter shares with the model below. }
\label{fig:rtil_mgan}
\end{figure*}

All experiments for mGANprior inversion method used PGGAN Architecture~\citep{karras2018progressive}, both vanilla and RTIL models were trained for 700,000 iterations and use the same training parameters, i.e., learning rate, batch size, and regularization updates. If interested please to refer to the code for more details. Below table \ref{table:PGGAN} outlines the details of the PGGAN architecture. The intermediate latent variable were added after the 4-th block, which corresponds to the 4-th cell in Table \ref{table:PGGAN}. 
\begin{table}[h!]
\centering
\caption{PGGAN for image size $256 \times 256 \times 3$}
\begin{tabular}{ | p {5cm} | p {3cm} | p {3cm} | p {3cm} |}
\hline
\multicolumn{3}{ | c | }{\textbf{Generator} } \\
\hline
Operation & Activation & Output Shape \\
\hline
Latent Vector & None & $512 \times$ 1 $\times 1$ \\
Conv $4\times 4$ & LRelu & $256 \times 4 \times 4$\\
Conv $3\times 3$ & LRelu & $256 \times 4 \times 4$\\
\hline
Upsample & None & $256 \times 8 \times 8 $ \\
Conv $3\times 3$ & LRelu & $256 \times 8 \times 8$\\
Conv $3\times 3$ & LRelu & $256 \times 8 \times 8$\\
\hline
Upsample & None & $256 \times 16 \times 16 $ \\
Conv $3\times 3$ & LRelu & $256 \times 16 \times 16$\\
Conv $3\times 3$ & LRelu & $256 \times 16 \times 16$\\
\hline
Upsample & None & $256 \times 32 \times 32 $ \\
Conv $3\times 3$ & LRelu & $256 \times 32 \times 32$\\
Conv $3\times 3$ & LRelu & $256 \times 32 \times 32$\\
\hline
Upsample & None & $256 \times 64 \times 64 $ \\
Conv $3\times 3$ & LRelu & $128 \times 64 \times 64$\\
Conv $3\times 3$ & LRelu & $128 \times 64 \times 64$\\
\hline
Upsample & None & $128 \times 128 \times 128 $ \\
Conv $3\times 3$ & LRelu & $64 \times 128 \times 128$\\
Conv $3\times 3$ & LRelu & $64 \times 128 \times 128$\\
\hline
Upsample & None & $64 \times 256 \times 256 $ \\
Conv $3\times 3$ & LRelu & $64 \times 256 \times 256$\\
Conv $3\times 3$ & LRelu & $64 \times 256 \times 256$\\
Conv $1 \times 1$ & Linear & $3 \times 256 \times 256$ \\
\hline
\multicolumn{3}{ | c | }{\textbf{Trainable Parameters} : 7,445,443 } \\
\hline 
\end{tabular}
\label{table:PGGAN}
\end{table}

\begin{figure*}[H]
	\begin{center}
	\includegraphics[width=1\linewidth]{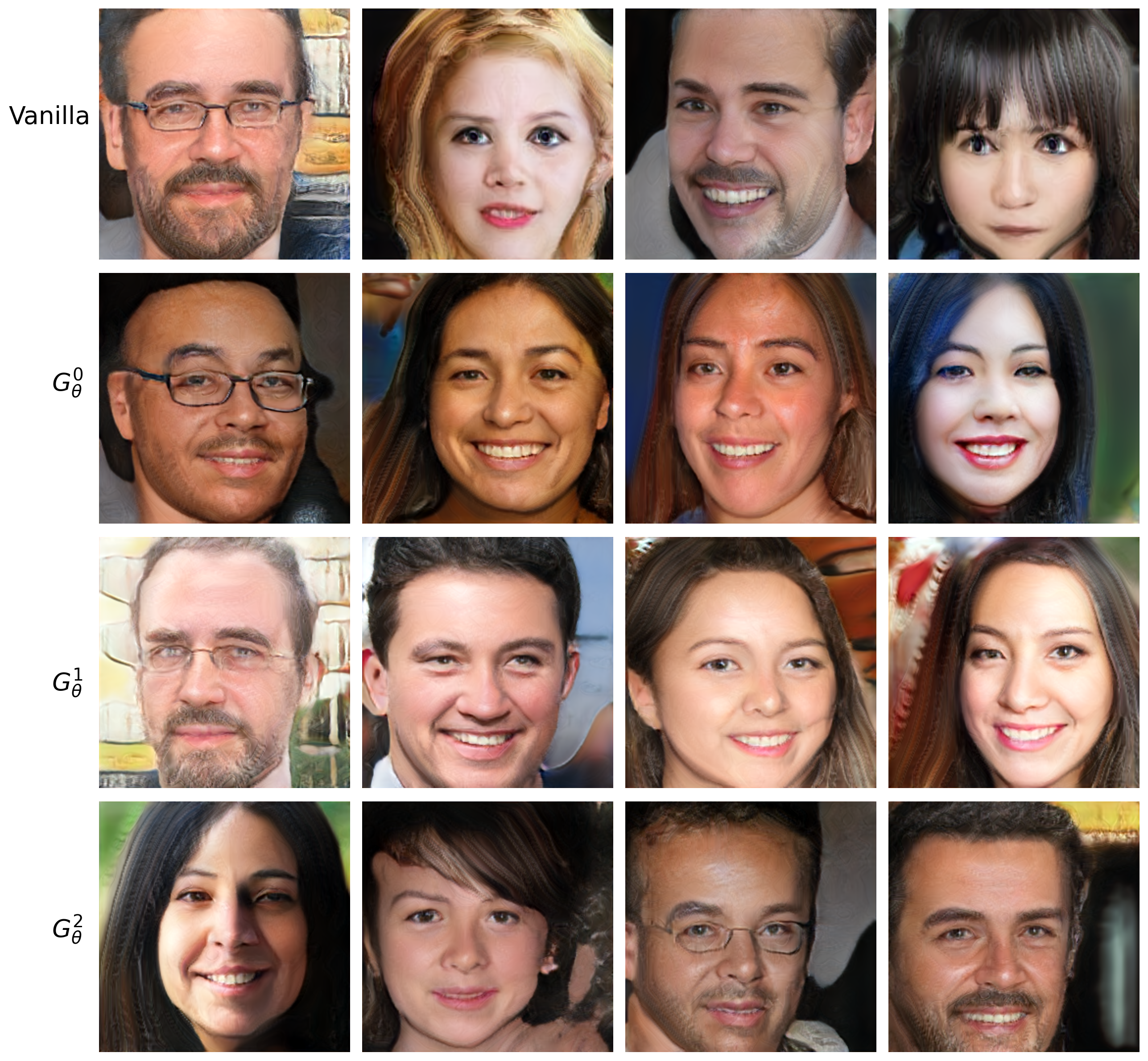}
	\end{center}
	\caption{Samples from each of our trained generative models. Top row consist samples from vanilla trained PGGan, then rows $G_{\theta}^{0} \cdots G_{\theta}^{2}$ are samples from family of generative models trained with RTIL.}
	\label{fig:samples}
\end{figure*}

\newpage

\subsection{Inversion Details for ILO}
 Hyper-parameters were tuned based on indications in the appendix of the ILO paper~\citep{ILO2021} and the official implementation. Specifically, learning rate and number of iterations per layer were tuned for compressed sensing, then the same configuration were used for inpainting and super-resolution. For experiments in Section \ref{sec:exp-ILO-RTIL} Figure \ref{fig:RTIL-vs-ILO} the configuration for the number of iterations per layer is  $\{2000,1000,1000,1000,2000\}$. ILO-RTIL uses a learning rate that begins at $.2$ at the initial layer, then ramps up linearly and ramps down using a cosine scheduler, as proposed by~\citep{karras2020analyzing}. As for ILO, each layer initialized with a learning rate of  $.1$ then optimized independently using the same learning rate scheduler, which is proposed in official Github repository~\citep{ILO2021}. We choose the intermediate layer up to which optimize to, based on ablation study Section \ref{sub:ilo-rtil} Figure \ref{fig:abl-ILO}. 
 Below we report the loss function used for each inverse problem in case of ILO-RTIL and RTIL: 
 \begin{itemize}
     \item \textit{Compressed Sensing} - Mean square error \\
     \item \textit{Inpainting}- Equal weighted combination of mean square error and LPIPS~\citep{zhang2018perceptual} for sufficient number of measurements. Notice that with more than 50\% missing pixels LPIPS did not help reconstruction performance. \\
     \item \textit{Super-Resolution} - Equal weighted combination of mean square error and LPIPS for RTIL models, for vanilla trained models LPIPS was weighted more. This configuration demonstrated increase in performance for all under-sampling ratio's. Best performance occurred when MSE and LPIPS used same image dimensions, i.e no upsampling for the LPIPS loss.For more details refer to GitHub code.
 \end{itemize}

\subsection{Inversion Details for mGANprior}

\begin{figure*}[h!]
\def\svgwidth{\linewidth}
\import{figs/mGANprior}{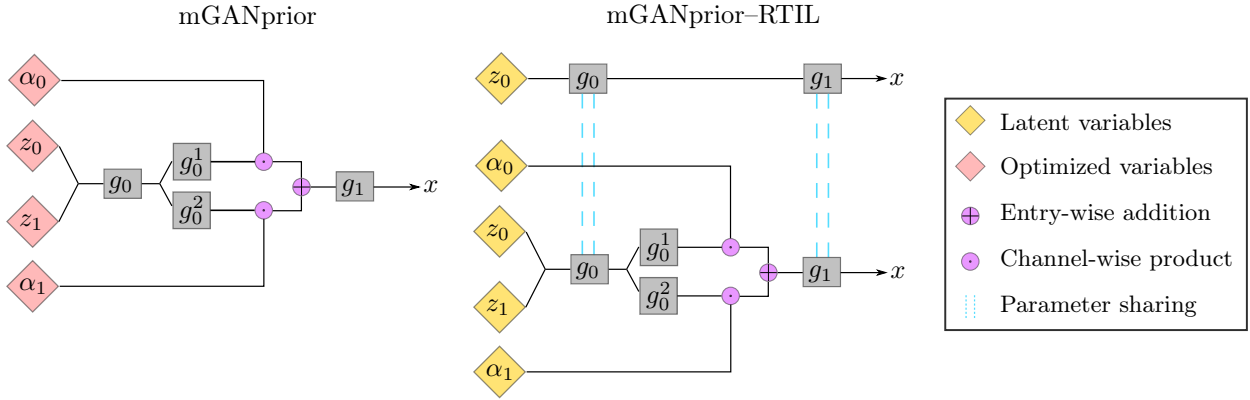}
\caption{The left side portrays vanilla training for mGANprior, and the right side demonstrates RTIL. For  mGANprior-RTIL this example with $N = 2$ latent codes, where the top model trains a vanilla model analogous to model on the left, then parameter shares with the model below. }
\label{fig:rtil_mgan}
\end{figure*}

Hyper-parameters were tuned based on the Github repository~\citep{Gu_2020_CVPR} for Section \ref{sec:RTIL-multi} Figure \ref{fig:RTIL-vs-mGAN}, learning rate and the number of iterations were tuned for compressed sensing then reused for inpainting and super-resolution. mGANprior-RTIL uses Adam~\citep{DBLP:journals/corr/KingmaB14} optimizer initialized at learning rate of $.1$ and optimized for 2500 iterations. mGANprior uses SGD initialized at learning rate $1$ and optimized for 2500 iterations, which is based on the official Github repository~\citep{Gu_2020_CVPR}. Empirically, the mGANprior~\citep{Gu_2020_CVPR} improvement in reconstruction saturated after $N=20$ latent codes. Moreover, for selecting which intermediate layer to optimize over for the vanilla model was determined by the ablation study in Figure \ref{fig:abl-app}. 
 Below we report the loss function used for each inverse problem in case of mGANprior-RTIL and mGANprior
\begin{itemize}
     \item \textit{Compressed Sensing} - Mean square error loss for both methods.\\
     \item \textit{Inpainting}- Mean sqaure error plus $l_{1}$ LPIPS regularization proposed by~\citep{Gu_2020_CVPR}. For mGANprior-RTIL the regularization term was $\lambda = .1$ and for mGANprior $\lambda=.5$. 
     \item \textit{Super-Resolution} - Mean square error plus $l_{1}$ LPIPS regularization proposed by~\citep{Gu_2020_CVPR} or mGANprior-RTIL the regularization term was scaled $\lambda = .1$ and for mGANprior $\lambda=.5$. Best performance occurred when MSE and LPIPS used same image dimensions, i.e no upsampling for the LPIPS loss.
\end{itemize}

\begin{figure*}[h!]
	\begin{center}
	\includegraphics[width=1\linewidth,height=8cm]{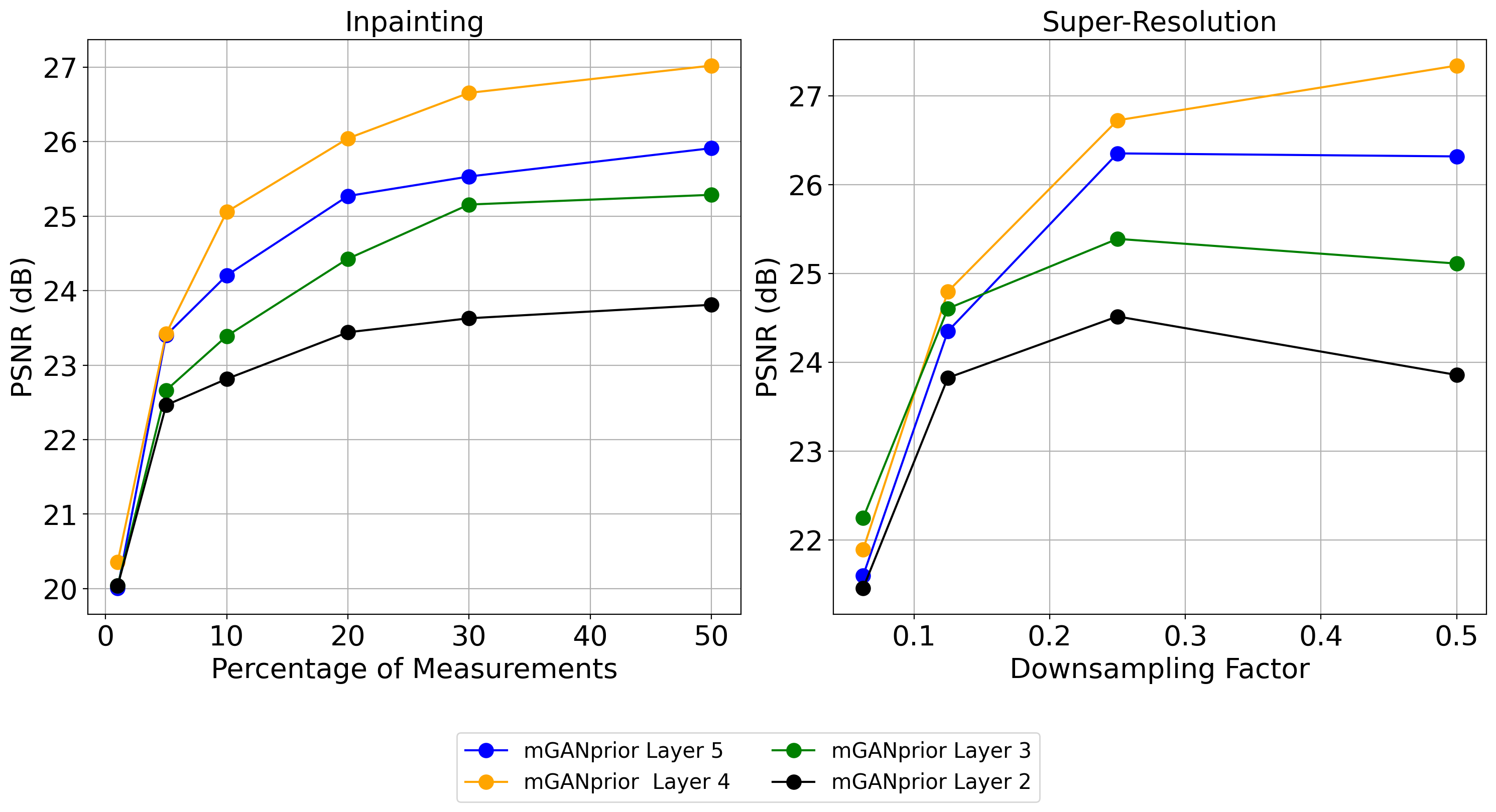}
	\end{center}
	\caption{Effects on  in-painting and super-resolution performance for various intermediate layer on validation set of 5 images.}
	\label{fig:abl-app}
\end{figure*}

\subsection{Inverse Problems Qualitative Results}
\textbf{Inpainting}
\begin{figure}[H]
	\begin{center}
	\includegraphics[width=1\linewidth]{}
	\end{center}
	\caption{ Qualitative comparison between our method ILO-RTIL and ILO for inpainting at 5\% of observed pixels. }
	\label{fig:qual_inp_ilo}
\end{figure}
\begin{figure}[H]
	\begin{center}
	\includegraphics[width=1\linewidth]{}
	\end{center}
	\vspace{-.5cm}
	\caption{ Qualitative comparison between our method mGANprior-RTIL and mGANprior for inpainting at 10\% of observed pixels. }
	\label{fig:qual_inp_mgan}
\end{figure}

\textbf{Super-Resolution}
\begin{figure}[H]
	\begin{center}
	\includegraphics[width=1.0\linewidth]{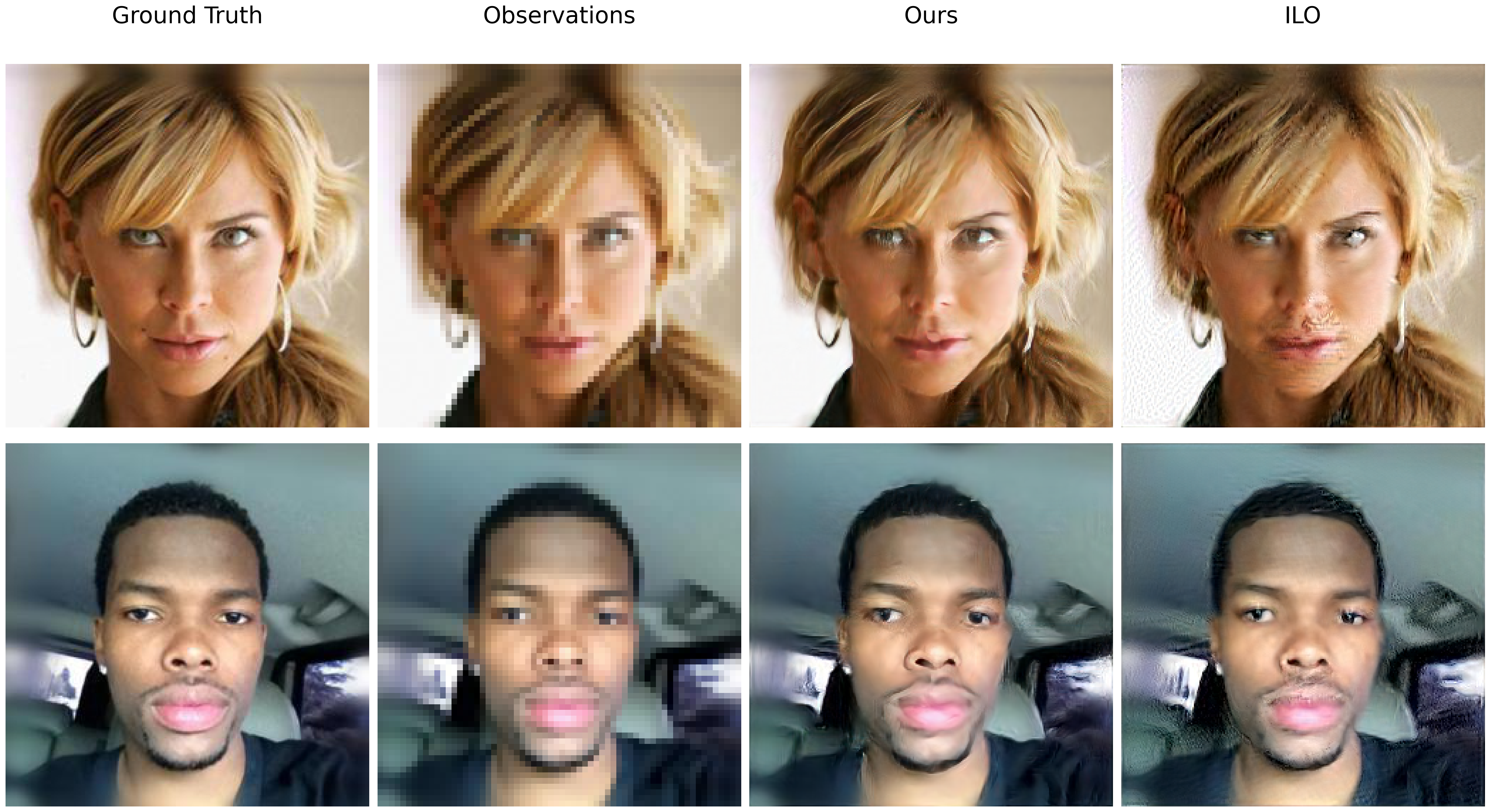}
	\end{center}
	\caption{Comparison between ILO-RTIL to ILO for super-resolution LR 4x (Downsampling factor $\frac{1}{4}$) }
	\label{fig:ilo_sr}
\end{figure}

\begin{figure}[H]
	\begin{center}
	\includegraphics[width=1.0\linewidth]{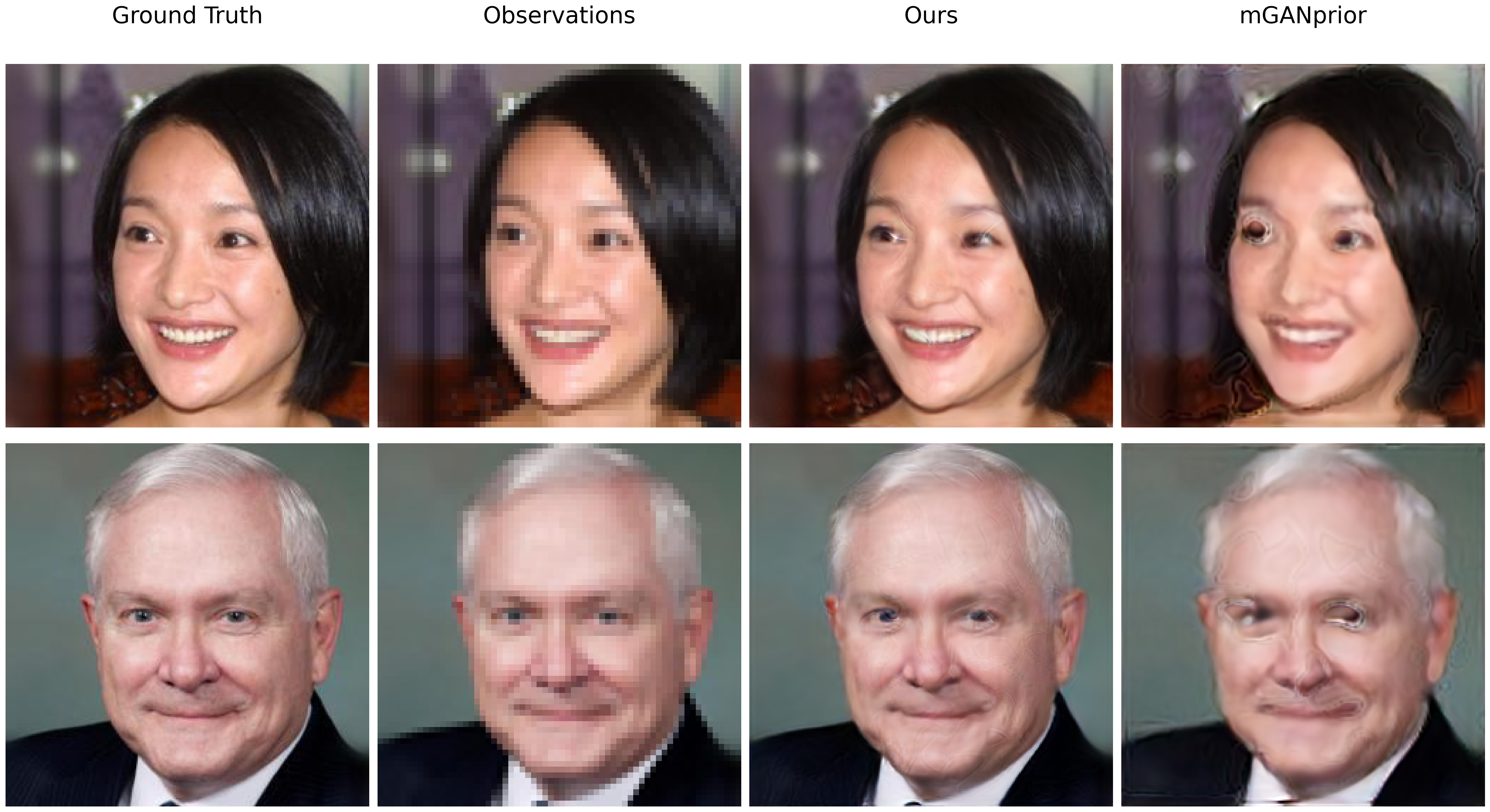}
	\end{center}
	\caption{Comparison between mGANprior-RTIL to mGANprior for super-Resolution LR 4x (Downsampling factor $\frac{1}{4}$)  }
	\label{fig:mgan_sr}
\end{figure}
\newpage
\subsection{Ablation}
\subsubsection{ILO-Ablation}
This section corresponds to Section \ref{sub:ilo-rtil}, Figure \ref{fig:abl-ILO}, the configuration for each optimization setting go as: $z_{0}=\{2000\}$, $z_{1}=\{2000,2000\}$, $z_{2}=\{2000,1000,2000\}$ ,  $z_{3}=\{2000,1000,1000,2000\}$, $z_{4}=\{2000,1000,1000,1000,2000\}$ iterations per layer.
\begin{figure}[H]
	\begin{center}
	\includegraphics[width=1\linewidth,height=8cm]{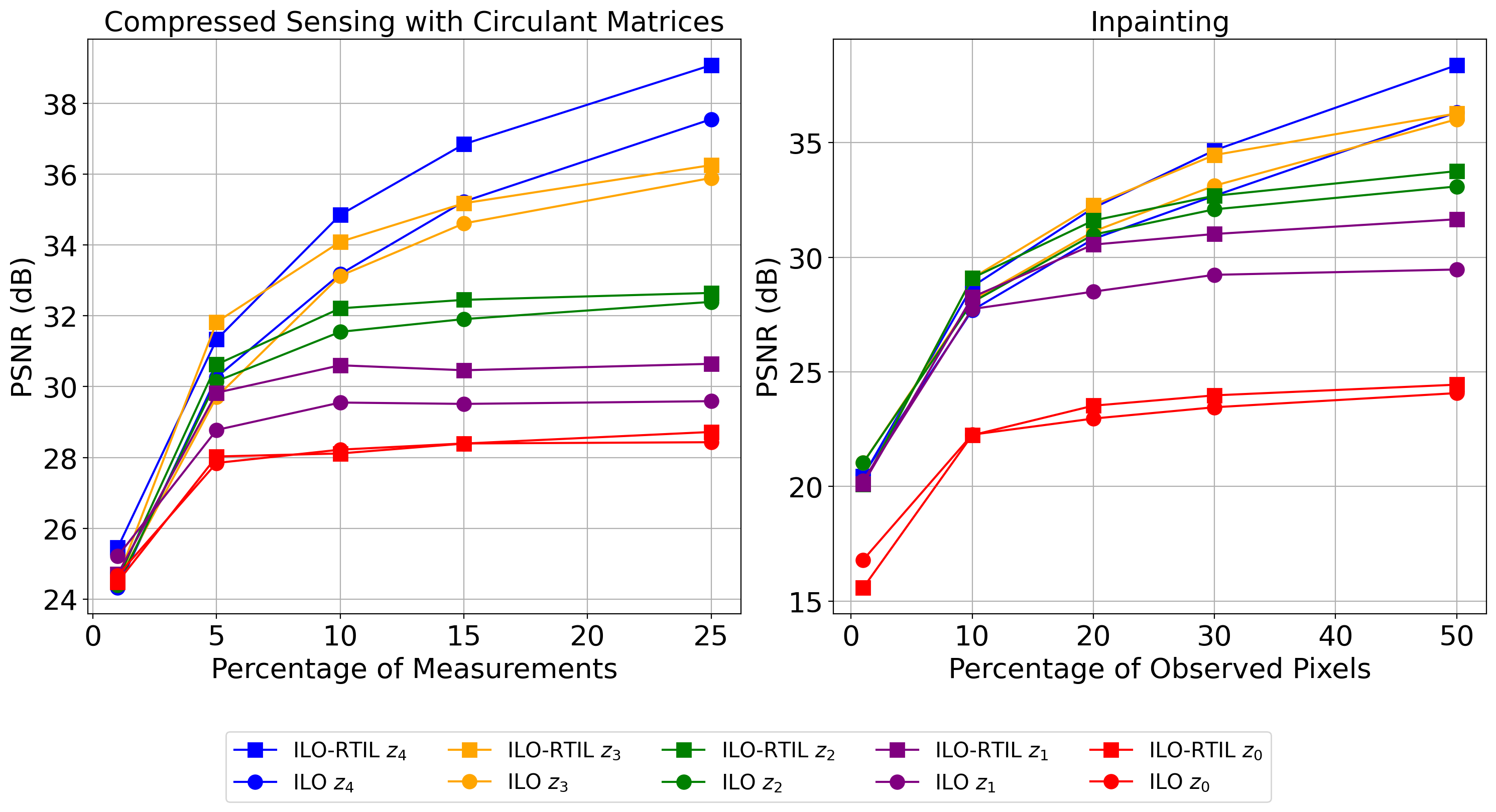}
	\end{center}
	\caption{Comparing compressed sensing performance between ILO-RTIL and ILO for various number of intermediate layers. }
	\label{fig:abl-ilo}
\end{figure}

\subsubsection{mGANprior-Ablation}
This section corresponds to Section \ref{sub:mgan-abl}
\begin{figure}[H]
	\begin{center}
	\includegraphics[width=1\linewidth,height=8cm]{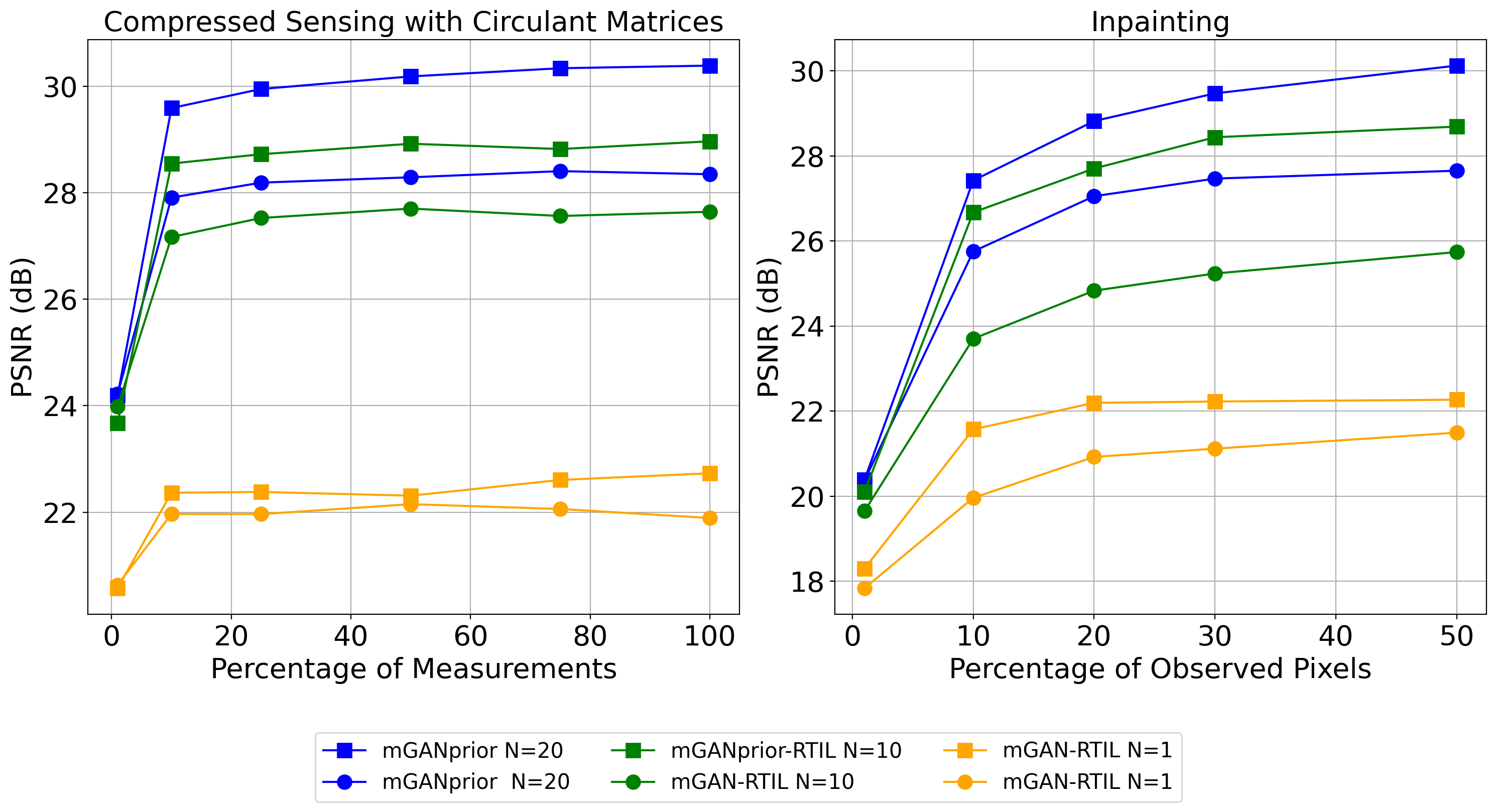}
	\end{center}
	\caption{Comparing compressed sensing performance between mGANprior-RTIL and mGANprior for various number of latent codes. }
	\label{fig:abl-mGAN}
\end{figure}

\end{alphasection}

\newpage
\section{Proof}

\begin{proof}[Proof of Lemma \ref{lemma:training}]
Notice that for any $W_1 \in \R^{n_d \times n_1}$
\[
    \begin{aligned}
    \E_{z_0, z_1} [\| \Wstar_1 (\Wstar_0 z_0 + z_1) - W_1 \Wstar_0 z_0  \|_2^2]
    &= \E_{z_0} [\| (\Wstar_1 - W_1) \Wstar_0 z_0 \|_2^2] + \E_{z_1} [\| \Wstar_1 z_1 \|_2^2]\\
    &= \| (\Wstar_1 - W_1 )\Wstar_0 \|_F^2 + \|\Wstar_1\|^2_F,
    \end{aligned}
\]
where the first equality used independence of $z_0$ and $z_1$ and the second used the fact that $\E_{z \sim \mathcal{N}(0,I_n)}\| M z\|_2^2 = \|M\|_F^2$ for any matrix $M \in \R^{m \times n}$.
The solutions of \eqref{eq:W11} are then solutions of 
\begin{equation}\label{eq:calF}
     \min_{W_1 \in \R^{n_1 \times n_0}} \| \mathcal{L}(\Wstar_1) - \mathcal{L}(W_1)\|_F^2
\end{equation}
where $\mathcal{L}: \R^{n_d \times n_1} \to \R^{n_1 \times n_0}$ is the linear operator given by $\mathcal{L}: W_1 \mapsto W_1 \Wstar_0$. Since  $n_1 > n_0$, this operator is singular and there are infinite solutions of \eqref{eq:calF}, all of which satisfy $\mathcal{L}(\Wstar_1) = \mathcal{L}(W_1)$, i.e. the thesis.

Regarding \eqref{eq:W12}, observe instead that for any $W_1 \in \R^{n_d \times n_1}$ 
\[
\begin{aligned}
\E_{z_0, z_1} [\| \Wstar_1 (\Wstar_0 z_0 + z_1) - W_1 (\Wstar_0 z_0 + z_1)  \|_2^2]
&= \| (\Wstar_1 - W_1) \Wstar_0 \|_F^2 + \|\Wstar_1 - W_1\|^2_F.
\end{aligned}
\]
This shows $W_1 = \Wstar_1 $ is the unique minimizer of \eqref{eq:W12}.
\end{proof}

Before analyzing the solution of compressed sensing with the trained generative models, we observe the following fact on the minimum norm solution of \eqref{eq:W11}. 

\begin{lemma}\label{eq:minNorm}
Let $\Wvan_1$ be the minimum Frobenius norm solution of \eqref{eq:W11}. Then $\Wvan_1 = \Wstar_1 \Wstar_0 (\Wstar_0)^\dagger$ where $(\Wstar_0)^\dagger$ is the pseudoinverse of $\Wstar_0$.
\end{lemma}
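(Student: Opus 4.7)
The plan is to combine Lemma \ref{lemma:training} (which characterizes the set of minimizers) with a Pythagorean decomposition in Frobenius norm to pick out the minimum-norm element.

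First, I would invoke Lemma \ref{lemma:training} to rewrite the constraint defining the feasible set of \eqref{eq:W11}: any minimizer $W_1$ must satisfy $W_1 \Wstar_0 = \Wstar_1 \Wstar_0$. Let $P := \Wstar_0 (\Wstar_0)^\dagger \in \R^{n_1 \times n_1}$, which is the orthogonal projection onto $\mathrm{range}(\Wstar_0)$. Using the identity $\Wstar_0 (\Wstar_0)^\dagger \Wstar_0 = \Wstar_0$, the constraint $W_1 \Wstar_0 = \Wstar_1 \Wstar_0$ is equivalent to $W_1 P = \Wstar_1 P$ (one direction is immediate by right-multiplying by $(\Wstar_0)^\dagger$; the other by right-multiplying by $\Wstar_0$).

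Next, I would split $W_1 = W_1 P + W_1 (I_{n_1} - P)$. Because $P$ and $I_{n_1}-P$ are complementary orthogonal projections, a short computation with the trace (using $P^2 = P$, $(I-P)P = 0$, and cyclicity) gives the Pythagorean identity
\begin{equation*}
    \|W_1\|_F^2 = \|W_1 P\|_F^2 + \|W_1(I_{n_1}-P)\|_F^2.
\end{equation*}
On the feasible set, the first term is fixed and equal to $\|\Wstar_1 P\|_F^2$, while $W_1(I_{n_1}-P)$ is entirely unconstrained. The minimum-norm solution is therefore obtained by choosing $W_1(I_{n_1}-P) = 0$, yielding $\Wvan_1 = \Wstar_1 P = \Wstar_1 \Wstar_0 (\Wstar_0)^\dagger$, which is exactly the claimed formula.

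There is essentially no hard obstacle: the only subtlety is verifying the equivalence of $W_1 \Wstar_0 = \Wstar_1 \Wstar_0$ with $W_1 P = \Wstar_1 P$, which relies on the elementary pseudoinverse identity $\Wstar_0 (\Wstar_0)^\dagger \Wstar_0 = \Wstar_0$ and on right-multiplying by $(\Wstar_0)^\dagger$. The Frobenius-norm split and the final minimization are then immediate.
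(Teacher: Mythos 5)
Your proof is correct and rests on the same core fact as the paper's: the minimum-Frobenius-norm solution of the underdetermined linear constraint $W_1\Wstar_0=\Wstar_1\Wstar_0$ is the unique feasible point orthogonal to the null space of $W_1\mapsto W_1\Wstar_0$, which you express via the Pythagorean split $\|W_1\|_F^2=\|W_1P\|_F^2+\|W_1(I_{n_1}-P)\|_F^2$ with $P=\Wstar_0(\Wstar_0)^\dagger$, whereas the paper verifies directly that the candidate $\Wstar_1\Wstar_0(\Wstar_0)^\dagger$ is Frobenius-orthogonal to every $W_1$ with $W_1\Wstar_0=0$. Your version is marginally more constructive (it derives the formula rather than plugging it in and checking orthogonality), but the two arguments are the same idea in equivalent guises.
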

\begin{proof}
    Notice that $\mathcal{L}(\Wstar_1) = \mathcal{L}(\Wvan_1)$, where $\mathcal{L}$ is defined in the proof of Lemma \eqref{lemma:training}.  
    We next show that $\Wvan_1$ is orthogonal to the null space of $\mathcal{L}$, which implies the thesis. 
    
    Let $W_1 \in \R^{n_d \times n_1}$ be such that $\mathcal{L}(W_1) = W_1 \Wstar_0 = 0$. Then we have
    \[
    \begin{aligned}
        \langle \Wvan_1, W_1\rangle_F &= tr((\Wvan_1)^T W_1) \\ 
        &= tr(W_1 (\Wstar_0 (\Wstar_0)^\dagger)^T (\Wstar_1)^T ) \\
        &= tr(W_1 \Wstar_0 (\Wstar_0)^\dagger (\Wstar_1)^T ) \\ 
        &= 0,
    \end{aligned}
    \]
    where third equality uses the fact that $\Wstar_0 (\Wstar_0)^\dagger$ is symmetric and the fourth one the assumption on $W_1$.
\end{proof}

\begin{proof}[Proof of Lemma \ref{lemma:CS}]
\smallskip${}$

- {Proof of \eqref{eq:ERR1}}.
Notice that by the previous lemma $\Wvan_1 = \Wstar_1 \Wstar_0 (\Wstar_0)^\dagger = \Wstar_1 \mathcal{P}_{\Wstar_0}$  where $\mathcal{P}_{\Wstar_0}$ is the orthogonal projector onto the range of $\Wstar_0$. Moreover, with probability 1, $(A \Wstar_1 \Wstar_0 )$ is full rank.
It follows then, that $(\zvan_0, \zvan_1)$ is given by
\[
\begin{aligned}
    \zvan_0 &= \argmin_{z_0 \in \R^{n_0}} \| y - A \Wstar_1 \Wstar_0 z_0 \|_2^2, \\
    \zvan_1 &= \argmin_{z_1 \in \R^{n_1}} \| y - A \Wstar_1 ( \Wstar_0 z_0^{(0)} + \mathcal{P}_{\Wstar_0} z_1 )\|_2^2. 
\end{aligned} 
\]
In particular the minimum norm solution $\zvan_1$ will satisfies $(I - \mathcal{P}_{\Wstar_0})
\zvan_1 = 0$.%In particular the minimum norm solution $\zvan_1$ will satisfies $(I - \mathcal{P}_{\Wstar_0})\zvan_1 = 0$. 
We have therefore $\Gvantil(\zvan_0, \zvan_1) =  \Wstar_1 (\Wstar_0 \zstar_0 +  \Wstar_0 \mathcal{M}_1 \zstar_1)$ where $\mathcal{M}_1 = (A \Wstar_1 \Wstar_0 )^\dagger \Wstar_1$.

The reconstruction error is then given by
    \[
    \begin{aligned}
        \E_{\zstar_0, \zstar_1} \| G^\star(\zstar_0, \zstar_1) -  \Gvantil (\zvan_{0}, \zvan_{1}) \|_2^2 
        &=   \| \Wstar_1   - \Wstar_1 \Wstar_0 \mathcal{M}_1 \|_F^2 \\
        &\geq \min_{\mathcal{M} \in \R^{n_0 \times n_1} } \| \Wstar_1   - \Wstar_1 \Wstar_0 \mathcal{M} \|_F^2\\
        &= \|\big(I_{n_d} - \mathcal{P}_{\Wstar_1 \Wstar_0} \big) \Wstar_1 \|_F^2
    \end{aligned}
    \] 
    where the first equality follows from the properties of the normal distribution. Regarding then the minimization problem $\min_{\mathcal{M} \in \R^{n_0 \times n_1} } \| \Wstar_1   - \Wstar_1 \Wstar_0 \mathcal{M}_1 \|_F^2$, notice that this is convex and the critical points satisfy
    \[
         (\Wstar_1 \Wstar_0)^T  \Wstar_1  = (\Wstar_1 \Wstar_0)^T (\Wstar_1 \Wstar_0) \mathcal{M}
    \]
    Using the fact that $\Wstar_1 \Wstar_0$ is full rank, the unique solution is found to be $[ (\Wstar_1 \Wstar_0)^T (\Wstar_1 \Wstar_0) ]^{-1}(\Wstar_1 \Wstar_0)^T  \Wstar_1 $, which gives the last equality. 
    \smallskip
    
    Note now that $\mathcal{P}_{\Wstar_1 \Wstar_0}$ is the projector onto the range of $\Wstar_1 \Wstar_0$ and $\Wstar_1$ is full rank. Thus
    \[
        \|\big(I_{n_d} - \mathcal{P}_{\Wstar_1 \Wstar_0} \big) \Wstar_1 \|_F^2
        \geq \|\big(I_{n_d} - \mathcal{P}_{\Wstar_1 \Wstar_0} \big) \Wstar_1 \|_2^2
        = \max_{h \in \text{range}(\Wstar_0)^\perp}\|\big(I_{n_d} - \mathcal{P}_{\Wstar_1 \Wstar_0} \big) \Wstar_1 h\|_2^2 > 0.
    \]
    \medskip
    
    - \textit{Proof of \eqref{eq:ERR2}}
    Notice again that with probability 1, $A \Wstar_1 \Wstar_0$ and $A \Wstar_1$ have full rank. Moreover $\Grtil (\zrtil_{0}, \zrtil_{1}) =  \Wstar_1 (\Wstar_0 \zrtil_{0} +  \zrtil_{1})$ where 
    \[
    \begin{aligned}
        \zrtil_0 &= \argmin_{z_0 \in \R^{n_0}} \| y - A \Wrtil_1 \Wstar_0 z_0 \|_2^2, \\
        \zrtil_1  &= \argmin_{z_1 \in \R^{n_1}} \| y - A \Wrtil_1 ( \Wstar_0 \zrtil_{0} + z_1 )\|_2^2. \\
    \end{aligned} 
    \]
    It is then easy to see that
    \[
        \begin{aligned}
            \zrtil_0 &= \zstar_0 + (A \Wstar_1 \Wstar_0 )^\dagger A \Wstar_1 \zstar_1 \\
            \zrtil_1 &= \zstar_1 - \Wstar_0 (A \Wstar_1 \Wstar_0 )^\dagger A \Wstar_1 \zstar_1 
        \end{aligned}
    \]
    where $(A \Wstar_1 \Wstar_0 )^\dagger$ denotes the pseudoinverse of $(A \Wstar_1 \Wstar_0 )$. It then follows that $\Wstar_1 (\Wstar_0 \zrtil_0 + \zrtil_1 ) =\Wstar_1 (\Wstar_0 z_0^\star +  z_1^\star) = x^\star$, which implies the thesis.
\end{proof}

\end{document}